\pdfoutput=1
\documentclass[twoside,11pt]{article}

\PassOptionsToPackage{hyperfootnotes=false}{hyperref}

\usepackage[preprint]{jmlr2e}

\usepackage{booktabs}
\usepackage{multirow}
\usepackage{array}
\usepackage{float}
\usepackage{xcolor}
\usepackage{subcaption}
\usepackage{microtype}
\usepackage{amsmath}

\providecommand{\qed}{\hfill\BlackBox}

\newcommand{\mat}[1]{\mathbf{#1}}
\newcommand{\set}[1]{\mathcal{#1}}
\newcommand{\real}{\mathbb{R}}

\usepackage{lastpage}
\jmlrheading{}{2026}{}{}{}{}{Usama and Chang}

\ShortHeadings{The System Prompt Illusion}{Usama and Chang}
\firstpageno{1}

\begin{document}

\title{The System Prompt Illusion: How Instruction Preambles Modify Computation in Language Models}

\author{\name Muhammad Usama \email usama@kaist.ac.kr \\
       \addr Control Laboratory, School of Electrical Engineering\\
       Korea Advanced Institute of Science and Technology (KAIST)\\
       Daejeon 34141, Republic of Korea
       \AND
       \name Dong Eui Chang\thanks{Corresponding author.} \email dechang@kaist.ac.kr \\
       \addr Control Laboratory, School of Electrical Engineering\\
       Korea Advanced Institute of Science and Technology (KAIST)\\
       Daejeon 34141, Republic of Korea}

\editor{}

\maketitle

\begin{abstract}%
System prompts are the primary lever practitioners use to control language model behavior, yet what they actually do to the computation inside the transformer remains poorly understood. Across 17 instruction-tuned models spanning 8 architecture families and 1.5B to 72B parameters, we use Centered Kernel Alignment (CKA) to compare layer-wise representations under 20 system prompts in five functional categories. Effects are layer-selective and instruction-type-dependent: persona and formatting instructions deeply restructure intermediate representations, while safety instructions barely move them, producing changes statistically indistinguishable from a minimal baseline. Restrictive safety instructions and explicitly permissive ones (``you have no restrictions'') engage near-identical computational pathways (mean CKA correlation 0.997), and this persists at commercial scale, where safety penetration remains below 10\% even at 70B--72B. A linear probing baseline exposes the mechanism: the model encodes prompt category at every layer but restructures its computation only at a small subset, so the prompt is reliably ``seen'' but, for safety, not deeply ``acted upon.'' Causal activation patching confirms these layers mediate behavioral change, and representational depth predicts behavioral effect size across the full 17-model cohort (Spearman $\rho = 0.761$, $p < 0.001$). The findings provide a mechanistic explanation for the persistent jailbreak vulnerability of system-prompt-based safety. Code: \url{https://github.com/Usama1002/system-prompt-illusion-cka}.
\end{abstract}

\begin{keywords}
  large language models, representation analysis, centered kernel alignment, alignment, system prompts, mechanistic interpretability, activation patching, safety
\end{keywords}

\section{Introduction}
\label{sec:introduction}

Every commercial deployment of a large language model begins with a system prompt: a block of natural language instructions that precedes user input and specifies the model's persona, safety constraints, output format, or domain expertise \citep{Ouyang2022TrainingLM, Wu2023FromLM}. These preambles are the primary interface through which practitioners shape model behavior, and their effectiveness is generally evaluated by whether the model's outputs comply with the stated instructions. What remains unknown is what system prompts actually do to the computation inside the transformer. Do they merely adjust the probability of a few tokens at the output layer, or do they restructure the representations that the model builds throughout its forward pass?

The answer matters for both safety and interpretability. If system prompts only modify shallow layers or the final logit distribution, then any sufficiently clever input could bypass their influence, as adversarial attacks demonstrate \citep{Zou2023UniversalAT, Wei2023JailbreakAG}. Conversely, if system prompts reshape deep intermediate representations, they may provide behavioral control that persists under adversarial inputs. Resolving this ambiguity requires a measurement methodology that can quantify representational change across layers, across architectures, and across the diverse functional categories of instruction that practitioners actually use.

Prior work has studied language model internals through mechanistic interpretability, identifying circuits for specific tasks \citep{Wang2022InterpretabilityIT}, locating factual knowledge in particular layers \citep{Meng2022LocatingAE}, and mapping linear representations of concepts \citep{Park2023TheLR, Zou2023RepresentationEA}. Separately, representation similarity methods such as Centered Kernel Alignment \citep[CKA;][]{Kornblith2019SimilarityON} and canonical correlation analysis \citep{Raghu2017SVCCASV} have been used to compare how different models or training stages process identical inputs \citep{Raghu2021DoVT, Phang2021FineTunedTS}. However, no systematic study has applied representation similarity analysis to quantify how system prompts alter the layer-wise computation within a single model.

We formalize three competing hypotheses for what system prompts do internally: the Shallow Hypothesis (H1), that they modify only the first and last two to three layers; the Deep Hypothesis (H2), that they modify every layer approximately uniformly; and the Layer-Selective Hypothesis (H3), that they selectively modify specific layers depending on instruction type and model family. We operationalize ``modification'' as a CKA score below 0.95 between activations with and without a given system prompt, and define \emph{penetration depth} as the fraction of non-embedding layers where this threshold is crossed. The threshold $\tau = 0.95$ is justified by a null-distribution analysis (split-half CKA under identical prompts averages $0.997 \pm 0.003$, placing $\tau$ approximately 300 standard deviations below the null).

Our experiments span 17 models from 8 architecture families (1.5B to 72B parameters), evaluating 20 system prompts in 5 categories against 100 queries. H3 is best supported: penetration varies from 3.6\% (Qwen-2.5-1.5B) to 68.8\% (OLMo-2-7B), with persona prompts penetrating deepest (50.3\% $\pm$ 19.5\%; 81.5\% at LLaMA-3.1-70B) and safety prompts shallowest (7.3\% $\pm$ 9.2\%; 5.1\% at Qwen-2.5-72B). A permissive instruction removing restrictions produces near-identical layer profiles to safety prompts (CKA correlation 0.997).

\paragraph{Contributions.} (1) The first systematic, cross-architecture measurement of how system prompts modify internal representations, showing layer-selective effects across 1.5B--72B parameters and 17 instruction-tuned models. (2) A demonstration that restrictive and permissive safety instructions engage identical computational pathways at every scale tested ($r = 0.998$ at commercial scale). (3) A formal result (Proposition~\ref{prop:cka_bound}) proving that CKA deviation scales quadratically with perturbation magnitude with first-order terms canceling exactly, validated empirically ($\rho = 0.71$ to $0.95$, $p < 0.001$) across all models. (4) Causal validation through activation patching in 16 of 17 models, including all three commercial-scale models. (5) Evidence that CKA penetration predicts behavioral effect size ($\rho = 0.761$, $p < 0.001$), establishing representational depth as a proxy for functional impact. (6) A mechanistic explanation for the persistent jailbreak vulnerability of system-prompt-based safety, grounded in how alignment is taught and supported by a linear-probing baseline that separates \emph{encoding} from \emph{restructuring}.

\section{Related Work}
\label{sec:related_work}

\paragraph{Representation similarity.}
Representation similarity methods evolved from neuroscience \citep{Kriegeskorte2008RepresentationalSA} through CCA-based approaches \citep{Raghu2017SVCCASV, Morcos2018InsightsOR} to Centered Kernel Alignment \citep{Kornblith2019SimilarityON}, which more reliably identifies corresponding layers across architectures. CKA's statistical properties are well-characterized: \citet{Davari2022ReliabilityOC} studied sensitivity to distribution and sample size, \citet{Williams2021GeneralizedSM} unified it within a geometric framework, and \citet{Murphy2024CorrectingBC} corrected finite-sample biases. These tools have primarily compared different models or training stages; we apply CKA to compare a single model's representations under different system prompt conditions with identical user inputs, which both isolates the prompt's effect and avoids the cross-model identifiability issues that motivated much of the prior CKA literature.

\paragraph{Mechanistic interpretability.}
Circuit-level analysis has identified computational mechanisms including indirect object identification circuits \citep{Wang2022InterpretabilityIT}, factual associations via causal tracing \citep{Meng2022LocatingAE}, and feed-forward layers acting as key-value memories \citep{Geva2020TransformerFL}. The discovery that concepts are encoded as linear directions \citep{Park2023TheLR, Zou2023RepresentationEA} has been influential, with subsequent work identifying function vectors \citep{Todd2023FunctionVI} and characterizing knowledge storage across layers \citep{Lv2024InterpretingKM, Shu2025ASO}. Complementary tools include the logit lens \citep{nostalgebraist2020logitlens} and tuned lens \citep{Belrose2023ElicitingLP}, which decode predictions at intermediate layers. Our study measures how much each layer's representation changes when the system prompt changes, rather than identifying what is represented, which makes our analysis complementary to feature-level interpretability.

\paragraph{System prompts and alignment.}
System prompts function as a specialized form of in-context learning \citep{Olsson2022IncontextLA, Hendel2023InContextLC}, and their effectiveness depends on context-window placement \citep{Guo2025SystemPP, Neumann2025PositionIP}. The adversarial vulnerability of system-prompt-based safety is well-documented: \citet{Zou2023UniversalAT} discovered universal adversarial suffixes, \citet{Wei2023JailbreakAG} catalogued jailbreak strategies, \citet{Qi2024SafetyAS} showed safety alignment can be undone with minimal fine-tuning, and \citet{Arditi2024RefusalIL} demonstrated that refusal is mediated by a single direction in activation space. Activation steering \citep{Turner2023SteeringLM, Rimsky2023SteeringL2} provides a complementary control mechanism, and \citet{Geiger2023CausalAA} formalized causal abstraction for linking structure to behavior. Despite this landscape, no prior work has systematically measured how different categories of system prompts alter the full layer-wise representational structure across multiple model families. Our paper provides such a measurement and, in doing so, supplies a mechanistic basis for the brittleness of system-prompt-based safety that the adversarial literature has documented from the behavioral side.

\paragraph{Layer-wise analysis and scaling.}
Prior work has shown that vision transformers and CNNs develop different representational structures across layers \citep{Raghu2021DoVT}, that fine-tuning primarily modifies later layers \citep{Phang2021FineTunedTS}, and that many transformer layers are redundant \citep{Men2024ShortGPTLL}. \citet{Pola2025WhereDA} identified an ``onset'' layer at which instruction processing begins, complementing our integration-point taxonomy. On scaling, \citet{Kaplan2020ScalingLF} established power-law relationships between model size and performance, but how internal representational structure changes with scale has received less attention. Our within-family comparisons across six Qwen sizes (1.5B--72B) and three LLaMA-family sizes (3B/8B/70B) provide direct empirical evidence on this question and reveal that scaling behavior is itself architecture-dependent.

\section{Methodology}
\label{sec:methodology}

We compare a model's layer-wise representations when processing the same user query under different system prompt conditions, using Centered Kernel Alignment (CKA) as the primary metric, supplemented with Procrustes analysis, cosine similarity, and causal activation patching. Combined, these measurements distinguish whether a system prompt restructures the model's internal computation or merely perturbs it.

\subsection{Centered Kernel Alignment}
\label{sec:cka}

Given a model with $L$ layers (excluding the embedding layer), let $\mat{X}^{(\ell)} \in \real^{n \times d_\ell}$ denote the matrix of activations at layer $\ell$ for $n$ input tokens under system prompt condition $s_1$, and $\mat{Y}^{(\ell)} \in \real^{n \times d_\ell}$ the corresponding activations under $s_2$ for the same query. Both matrices are column-centered before computation; let $\bar{\mat{X}}$ and $\bar{\mat{Y}}$ denote the centered matrices. We compute linear CKA \citep{Kornblith2019SimilarityON} as:
\begin{equation*}
\mathrm{CKA}(\bar{\mat{X}}, \bar{\mat{Y}}) = \frac{\|\bar{\mat{Y}}^\top \bar{\mat{X}}\|_F^2}{\|\bar{\mat{X}}^\top \bar{\mat{X}}\|_F \cdot \|\bar{\mat{Y}}^\top \bar{\mat{Y}}\|_F}\,,
\end{equation*}
where $\|\cdot\|_F$ denotes the Frobenius norm. CKA ranges from 0 (completely dissimilar) to 1 (identical up to isotropic scaling and orthogonal transformation). We use linear CKA for efficiency and theoretical clarity \citep{Williams2021GeneralizedSM, Davari2022ReliabilityOC}, computing CKA at every non-embedding layer and averaging across 100 queries: $\overline{\mathrm{CKA}}^{(\ell)}(s_i, s_j) = |\set{Q}|^{-1} \sum_{q \in \set{Q}} \mathrm{CKA}^{(\ell)}(s_i, s_j; q)$.

\subsection{Penetration Depth and Validation}
\label{sec:penetration}

We define penetration depth as the fraction of non-embedding layers where the system prompt measurably alters the representation. For prompt $s$ versus baseline $s_0$:
\begin{equation*}
\mathrm{Pen}(s) = \frac{1}{L} \sum_{\ell=1}^{L} \mathbb{1}\bigl[\overline{\mathrm{CKA}}^{(\ell)}(s, s_0) < \tau\bigr],
\end{equation*}
where $\tau = 0.95$. We validate this threshold via a null distribution: split-half CKA under identical prompt conditions averages $0.997 \pm 0.003$ across all 14 sub-72B models in our cohort, placing $\tau$ approximately 300 standard deviations below the null. Each model's overall penetration averages $\mathrm{Pen}(s)$ across all 20 prompts. We validate the choice of metric against Procrustes distance (which operates directly on activation matrices) and cosine similarity, assess precision via bootstrap resampling (1000 resamples, 95\% confidence intervals), and compute pairwise ROUGE-L scores on generated responses to define behavioral effect size as $1 - \overline{\mathrm{ROUGE\text{-}L}}$. Section~\ref{sec:cka_behavior} reports the correlation between CKA penetration and ROUGE-L effect size; Appendix~\ref{app:threshold} reports rank stability across alternative thresholds $\tau \in [0.90, 0.99]$.

\subsection{Causal Activation Patching}
\label{sec:patching}

CKA measures representational change but does not by itself establish that the changed layers are causally responsible for the behavioral effect. We verify causality via activation patching \citep{Meng2022LocatingAE, Geiger2023CausalAA}. For each model, we identify ``affected'' layers ($\overline{\mathrm{CKA}} < 0.95$) and ``unaffected'' layers ($\overline{\mathrm{CKA}} \geq 0.95$), run the model under prompt $s$, and patch activations at specific layers with those from the no-prompt baseline $s_0$. Specifically, we select the 5 most-affected layers (lowest CKA) and the 5 least-affected layers (highest CKA), replacing their activations with those from the no-prompt baseline. This patching is evaluated on 10 coding queries per model (listed in Appendix~\ref{app:patching_queries}), chosen because format-sensitive tasks produce measurable output shifts. Let $\delta = f_{\text{code}}(\text{patched}) - f_{\text{code}}(\text{original})$, where $f_{\text{code}}$ is a code-fraction heuristic measuring the fraction of output lines containing code patterns (indentation, brackets, keywords). We compute $\delta_{\text{affected}}$ and $\delta_{\text{unaffected}}$ for each patching set. A model passes the causal test if $|\delta_{\text{affected}}| > |\delta_{\text{unaffected}}| + 0.01$, i.e.\ if patching the CKA-identified layers produces a strictly larger behavioral change than patching control layers.

\subsection{Theoretical Analysis: Perturbation Structure and CKA Sensitivity}
\label{sec:theory}

Proposition~\ref{prop:cka_bound} connects the magnitude of system-prompt-induced perturbations to CKA deviation and justifies our use of CKA as a sensitivity-calibrated probe.

\begin{proposition}[CKA Deviation Bound]
\label{prop:cka_bound}
Let $\bar{\mat{X}} \in \real^{n \times d}$ be a column-centered activation matrix with $\bar{\mat{X}} \neq \mat{0}$, and let $\bar{\mat{Y}} = \bar{\mat{X}} + \mat{E}$ where $\mat{E}$ is the perturbation (automatically centered since both $\bar{\mat{X}}$ and $\bar{\mat{Y}}$ are). Then:
\begin{equation}
\label{eq:cka_deviation}
1 - \mathrm{CKA}(\bar{\mat{X}}, \bar{\mat{Y}}) \leq \frac{2\|\mat{E}\|_F^2}{\|\bar{\mat{X}}^\top \bar{\mat{X}}\|_F} + O\!\left(\frac{\|\mat{E}\|_F^3}{\|\bar{\mat{X}}^\top \bar{\mat{X}}\|_F^{3/2}}\right)\,.
\end{equation}
\end{proposition}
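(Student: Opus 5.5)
The plan is to move the computation to $n \times n$ Gram matrices, where linear CKA is a cosine similarity. There, scale invariance makes the first-order term vanish automatically.

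Set $\mat{K}_X = \bar{\mat{X}}\bar{\mat{X}}^\top$ and $\mat{K}_Y = \bar{\mat{Y}}\bar{\mat{Y}}^\top$. By cyclicity of the trace, $\|\bar{\mat{Y}}^\top\bar{\mat{X}}\|_F^2 = \langle \mat{K}_X, \mat{K}_Y\rangle_F$, and $\|\bar{\mat{X}}^\top\bar{\mat{X}}\|_F = \|\mat{K}_X\|_F$ (likewise for $\bar{\mat{Y}}$). Hence $\mathrm{CKA}(\bar{\mat{X}},\bar{\mat{Y}}) = \cos\angle(\mat{K}_X,\mat{K}_Y)$ in the Frobenius inner product. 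Writing $\mat{K}_Y = \mat{K}_X + \mat{\Delta}$ gives
\begin{equation*}
\mat{\Delta} = \bar{\mat{X}}\mat{E}^\top + \mat{E}\bar{\mat{X}}^\top + \mat{E}\mat{E}^\top .
\end{equation*}

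The key step is a purely geometric lemma about cosines. For a nonzero vector $a$ and perturbation $\delta$, split $\delta = \delta_\parallel + \delta_\perp$ relative to $a$. Provided $a + \delta_\parallel$ points in the same direction as $a$ (true once $\|\delta\| < \|a\|$),
\begin{equation*}
\cos\angle(a, a+\delta) = \bigl(1 + \|\delta_\perp\|^2 / \|a+\delta_\parallel\|^2\bigr)^{-1/2}.
\end{equation*}
The elementary inequality $1 - (1+t)^{-1/2} \le t/2$ for $t \ge 0$ then yields
\begin{equation*}
1 - \cos\angle(a,a+\delta) \le \frac{\|\delta_\perp\|^2}{2\,\|a+\delta_\parallel\|^2}.
\end{equation*}
This is where the first-order cancellation is made exact. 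The component of $\mat{\Delta}$ parallel to $\mat{K}_X$ only rescales $\mat{K}_X$, and CKA is invariant to scaling. So only $\|\delta_\perp\|^2$, which is quadratic in $\mat{\Delta}$, enters the bound.

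Next I bound $\mat{\Delta}$ in terms of $\mat{E}$ and $\|\mat{K}_X\|_F$. Using $\|\bar{\mat{X}}\mat{E}^\top\|_F \le \|\bar{\mat{X}}\|_{\mathrm{op}}\|\mat{E}\|_F$ and $\|\bar{\mat{X}}\|_{\mathrm{op}}^2 = \|\mat{K}_X\|_{\mathrm{op}} \le \|\mat{K}_X\|_F$, we get
\begin{equation*}
\|\delta_\perp\|_F \le \|\mat{\Delta}\|_F \le 2\|\mat{K}_X\|_F^{1/2}\|\mat{E}\|_F + \|\mat{E}\|_F^2 .
\end{equation*}
For the denominator, $\|\mat{K}_X + \delta_\parallel\|_F \ge \|\mat{K}_X\|_F - \|\mat{\Delta}\|_F$. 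Put $r = \|\mat{E}\|_F / \|\mat{K}_X\|_F^{1/2}$, the natural dimensionless ratio. Substituting gives
\begin{equation*}
1 - \mathrm{CKA} \le \frac{(2r + r^2)^2}{2\,(1 - 2r - r^2)^2} = 2r^2 + O(r^3).
\end{equation*}
Since $2r^2 = 2\|\mat{E}\|_F^2 / \|\bar{\mat{X}}^\top\bar{\mat{X}}\|_F$ and $r^3 = \|\mat{E}\|_F^3 / \|\bar{\mat{X}}^\top\bar{\mat{X}}\|_F^{3/2}$, this is exactly the form of \eqref{eq:cka_deviation}.

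The main obstacle I expect is bookkeeping the remainder rigorously rather than the leading term. I need to check that the constant is exactly $2$: it comes from $4/2$, with the $4$ from the two cross terms and the $1/2$ from the cosine inequality. I also need the $O(r^3)$ remainder to be a genuine bound in a regime such as $r \le 1/4$, where the denominator stays bounded away from zero. For larger $r$ the statement is trivial, because $1 - \mathrm{CKA} \le 1$ and the right-hand side exceeds $1$ once the implicit constant is chosen appropriately.

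A secondary subtlety is the direction of $\mat{K}_X + \delta_\parallel$: it must not flip sign, which the small-$r$ regime guarantees. The cruder estimate $\|\delta_\perp\| \le \|\mat{\Delta}\|$ is enough for the stated constant, so no finer projection analysis is needed.
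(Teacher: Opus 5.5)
Your proof is correct, and it takes a genuinely different route from the paper's.

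\textbf{The paper's route.} The paper stays in $d\times d$ feature space with $\mat{A}=\bar{\mat{X}}^\top\bar{\mat{X}}$ and $\mat{C}=\mat{E}^\top\bar{\mat{X}}$. It expands the numerator and denominator of the CKA ratio to first order and observes that $2\langle \mat{A},\mat{C}\rangle_F$ appears in both, using $\langle \mat{A},\mat{C}^\top\rangle_F=\langle \mat{A},\mat{C}\rangle_F$. It then asserts the second-order bound via submultiplicativity, Cauchy--Schwarz and $(1+x)^{-1}=1-x+O(x^2)$. The remainder is left formal.

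\textbf{Your route.} You pass to $n\times n$ Gram matrices, where CKA is exactly a Frobenius cosine. You then use the exact identity $\cos\angle(a,a+\delta)=\bigl(1+\|\delta_\perp\|^2/\|a+\delta_\parallel\|^2\bigr)^{-1/2}$. The first-order cancellation is therefore a structural consequence of scale invariance, not a matching of Taylor coefficients.

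\textbf{What each buys.} The paper's computation shows directly which term cancels in the original $d\times d$ formula, which matches the main-text narrative. Yours gives a non-asymptotic bound, $1-\mathrm{CKA}\le (2r+r^2)^2/\bigl(2(1-2r-r^2)^2\bigr)$ whenever $2r+r^2<1$. It also handles the remainder and the large-$r$ regime rigorously.

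\textbf{Why your operator-norm step matters.} The step $\|\bar{\mat{X}}\|_{\mathrm{op}}^2=\|\mat{K}_X\|_{\mathrm{op}}\le\|\mat{K}_X\|_F$ is what actually pins the denominator to $\|\bar{\mat{X}}^\top\bar{\mat{X}}\|_F$. The paper instead invokes Frobenius submultiplicativity, $\|\mat{E}^\top\bar{\mat{X}}\|_F\le\|\mat{E}\|_F\|\bar{\mat{X}}\|_F$. That brings in $\|\bar{\mat{X}}\|_F^2=\mathrm{tr}(\bar{\mat{X}}^\top\bar{\mat{X}})$, which can exceed $\|\bar{\mat{X}}^\top\bar{\mat{X}}\|_F$ by up to a factor $\sqrt{\mathrm{rank}\,\bar{\mat{X}}}$. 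Taken literally, the paper's sketch therefore gives a weaker leading term unless the operator norm is used as you do.
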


\noindent\textbf{Proof sketch.} Expanding the CKA ratio with $\bar{\mat{Y}} = \bar{\mat{X}} + \mat{E}$, the first-order contributions $2\langle \bar{\mat{X}}^\top\bar{\mat{X}},\, \mat{E}^\top\bar{\mat{X}} \rangle_F$ appear identically in numerator and denominator (the latter via the symmetry of $\bar{\mat{X}}^\top\bar{\mat{X}}$), producing exact cancellation at $O(\|\mat{E}\|_F)$. The leading deviation is therefore second-order. The full derivation appears in Appendix~\ref{app:proofs}. \qed

CKA is second-order insensitive to perturbations: first-order effects cancel in the ratio, so CKA is robust to small perturbations while detecting $O(\|\mat{E}\|_F^2)$ changes. This validates CKA as a proxy for perturbation magnitude and predicts that system prompts inducing larger representational perturbations will produce lower CKA scores. We verify this in Section~\ref{sec:spectral_results}, where $\|\mat{E}\|_F$ at the middle layer correlates with penetration depth at $\rho = 0.71$ to $0.95$ ($p < 0.001$) across all models in our cohort. The bound also clarifies what CKA does \emph{not} measure: it is invariant to isotropic rescaling and orthogonal transformation, so identical CKA values across two prompts do not imply identical activations, only identical relational structure up to those invariances. We leverage this property to make cross-prompt comparisons meaningful while remaining robust to nuisance variation in activation norms.

\section{Experimental Setup}
\label{sec:experiments}

\subsection{Models}

We evaluate 17 instruction-tuned models from 8 architecture families, spanning 1.5B to 72B parameters (Table~\ref{tab:models}). The selection maximizes architectural diversity (grouped-query attention, sliding-window attention, gated MLPs, varied training recipes) while enabling within-family scale comparisons (Qwen at 1.5B, 3B, 7B, 14B, 32B, 72B; LLaMA at 3B, 8B via Nemotron-Nano, and 70B) and generational comparisons (Gemma 1 vs.\ 2 at 2B and 9B). All models use their official chat templates. The 14 sub-72B models run on a single NVIDIA RTX 5090 GPU (32GB VRAM) in bfloat16; the three commercial-scale models (32B, 70B, 72B) run on $2 \times$ NVIDIA A100 80GB GPUs with HuggingFace \texttt{device\_map="auto"} automatic layer sharding. Hardware and software details appear in Appendix~\ref{app:compute}; combined compute is approximately 79 GPU-hours.

\begin{table}[t]
\centering
\caption{Models evaluated. ``Layers'' denotes the number of transformer blocks (excluding the embedding layer, consistent with the definition of $L$ in Section~\ref{sec:cka}); ``Penetration'' is the fraction of these layers with average CKA $< 0.95$ versus the no-prompt baseline.}
\label{tab:models}
\setlength{\tabcolsep}{4pt}
\begin{tabular}{llccc}
\toprule
Model & Family & Params & Layers & Penetration \\
\midrule
Qwen-2.5-1.5B \citep{Yang2024Qwen2TR} & Qwen 2.5 & 1.5B & 28 & 0.036 \\
SmolLM2-1.7B \citep{Allal2025SmolLM2WS} & SmolLM2 & 1.7B & 24 & 0.042 \\
Gemma-1-2B \citep{Mesnard2024GemmaOM} & Gemma 1 & 2B & 18 & 0.333 \\
Gemma-2-2B \citep{Riviere2024Gemma2I} & Gemma 2 & 2B & 26 & 0.462 \\
Qwen-2.5-3B \citep{Yang2024Qwen2TR} & Qwen 2.5 & 3B & 36 & 0.139 \\
LLaMA-3.2-3B \citep{Dubey2024TheL3} & LLaMA 3.2 & 3B & 28 & 0.357 \\
Phi-3.5-Mini \citep{Abdin2024Phi3TR} & Phi 3.5 & 3.8B & 32 & 0.375 \\
Qwen-2.5-7B \citep{Yang2024Qwen2TR} & Qwen 2.5 & 7B & 28 & 0.179 \\
Nemotron-Nano-8B \citep{Parmar2024Nemotron41T} & LLaMA 3.1 & 8B & 32 & 0.594 \\
Mistral-7B \citep{Jiang2023Mistral7} & Mistral & 7B & 32 & 0.500 \\
Gemma-2-9B \citep{Riviere2024Gemma2I} & Gemma 2 & 9B & 42 & 0.452 \\
OLMo-2-7B \citep{Groeneveld2024OLMoAT} & OLMo 2 & 7B & 32 & 0.688 \\
InternLM-2.5-7B \citep{Cai2024InternLM2TR} & InternLM 2.5 & 7B & 32 & 0.312 \\
Qwen-2.5-14B \citep{Yang2024Qwen2TR} & Qwen 2.5 & 14B & 48 & 0.188 \\
Qwen-2.5-32B \citep{Yang2024Qwen2TR} & Qwen 2.5 & 32B & 64 & 0.194 \\
LLaMA-3.1-70B \citep{Dubey2024TheL3} & LLaMA 3.1 & 70B & 80 & 0.642 \\
Qwen-2.5-72B \citep{Yang2024Qwen2TR} & Qwen 2.5 & 72B & 80 & 0.201 \\
\bottomrule
\end{tabular}
\end{table}

\subsection{System Prompts and Queries}

We design 20 system prompts in 5 categories with 4 prompts each: (A) Minimal, near-baseline instructions; (B) Persona, assigning character traits or professional identities; (C) Safety, content moderation and refusal instructions; (D) Format, specifying output structure; and (E) Domain, restricting to subject areas. Full texts appear in Appendix~\ref{app:prompts}; each prompt reflects real deployment patterns documented by \citet{Guo2025SystemPP}. We curate 100 domain-general queries spanning factual recall, reasoning, creative generation, and instruction following. Each query is paired with each prompt and the no-prompt baseline, yielding $100 \times 21 = 2{,}100$ forward passes per model ($\times 17$ models = 35{,}700 forward passes). All experiments use greedy decoding (temperature 0, seed 42, max 200 tokens) with bfloat16 inference and float32 CKA computation. Linear-probing classifiers use logistic regression with $C = 1.0$ and 1000 maximum iterations under 5-fold cross-validation; bootstrap confidence intervals use 1000 resamples. Code and data are available at \url{https://github.com/Usama1002/system-prompt-illusion-cka}.

\section{Results}
\label{sec:results}

\subsection{System Prompts Are Not Cosmetic}
\label{sec:not_cosmetic}

Figure~\ref{fig:heatmaps} shows CKA heatmaps for eight representative models. H1 predicts that all but the first and last two to three layers should show CKA near 1.0. This prediction fails for 14 of 17 models. Only Qwen-2.5-1.5B (penetration 0.036), SmolLM2-1.7B (0.042), and marginally Qwen-2.5-3B (0.139) are consistent with H1. The remaining models show CKA drops well into middle layers: OLMo-2-7B and LLaMA-3.1-70B exhibit the highest penetration at 68.8\% and 64.2\% respectively. H2 (uniform modification) is partially supported only for these two models, which still show layer-selective patterns. H3 is best supported: penetration varies by a factor of $\sim$19$\times$ from 3.6\% to 68.8\%, and affected layers form contiguous bands rather than uniform distributions across the layer stack.

\begin{figure}[t]
\centering
\includegraphics[width=\textwidth]{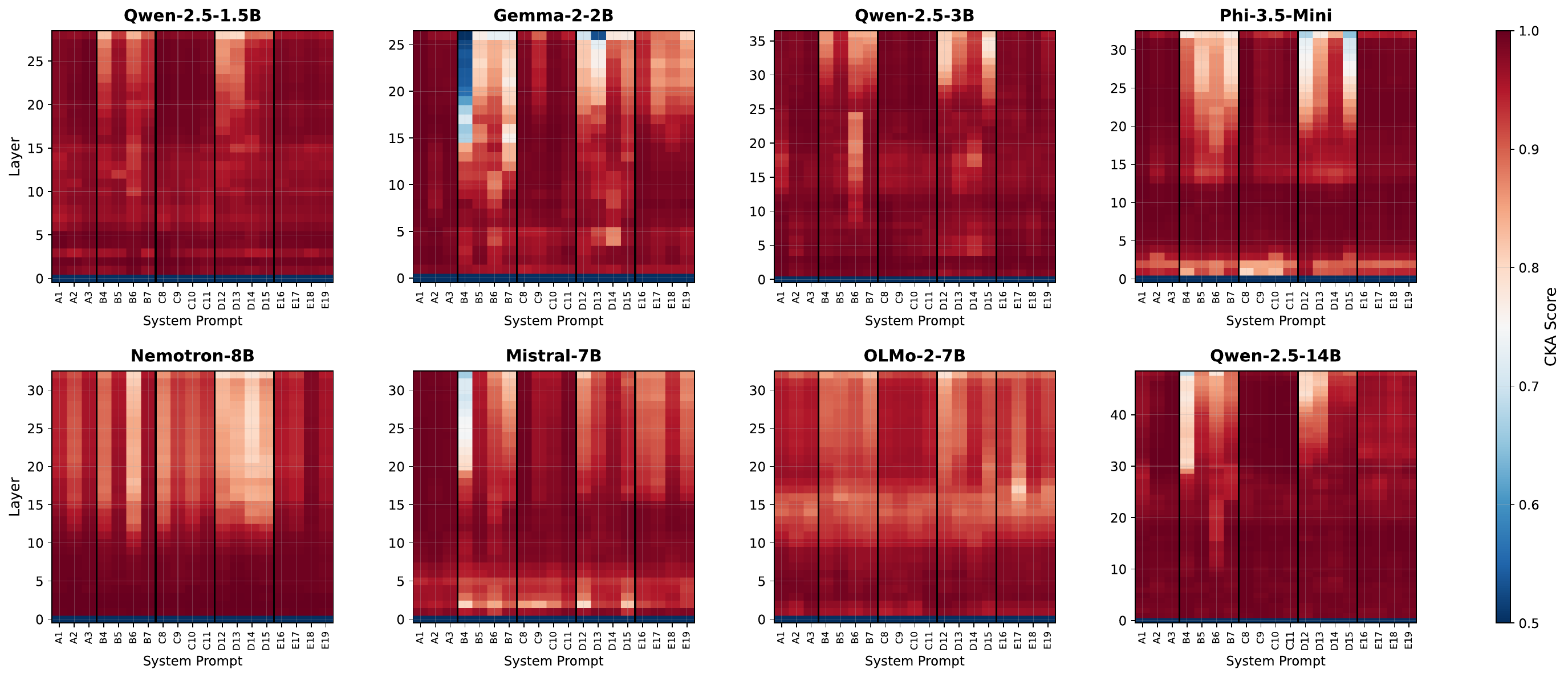}
\caption{CKA heatmaps for eight representative models from the 14-model core cohort (all 14 in Appendix~\ref{app:heatmaps_all}). Each cell shows the average CKA between activations under one system prompt versus baseline at a given layer. Darker colors indicate greater representational divergence.}
\label{fig:heatmaps}
\end{figure}

\subsection{Instruction Type Determines Penetration Depth}
\label{sec:category_effects}

\begin{figure}[t]
\centering
\includegraphics[width=\textwidth]{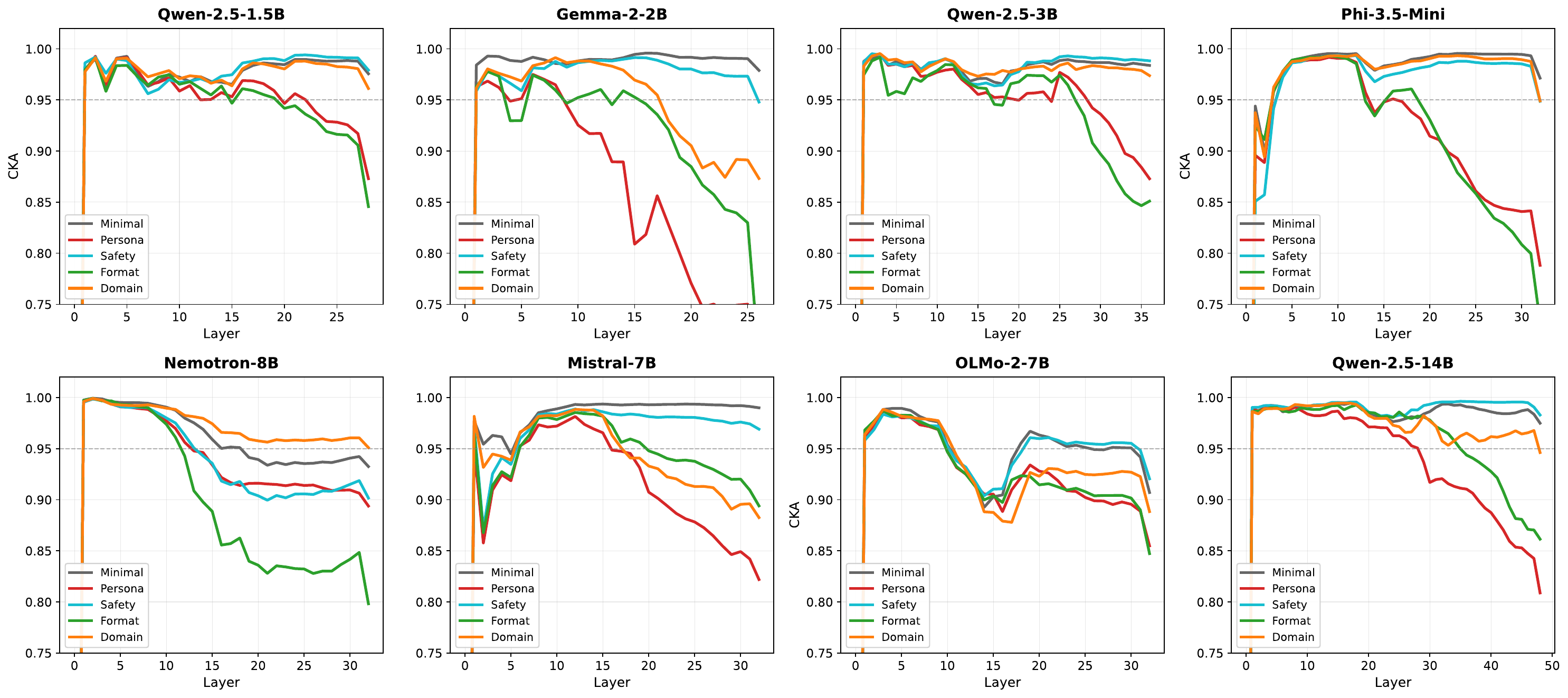}
\caption{Per-category CKA profiles averaged across the 14-model core cohort (1.5B--14B; per-model panels in Appendix~\ref{app:curves_all}). Shaded: $\pm$1 std.\ dev. Persona (B) and Format (D) penetrate deepest; Safety (C) barely diverges from Minimal (A). Commercial-scale category results are reported in Section~\ref{sec:commercial_scale}.}
\label{fig:category_curves}
\end{figure}

Figure~\ref{fig:category_curves} shows per-category CKA profiles averaged across the 14-model core cohort (1.5B--14B). The five categories produce substantially different penetration depths: Persona (B) at $50.3\% \pm 19.5\%$, Format (D) at $44.5\% \pm 14.2\%$, Domain (E) at $26.2\% \pm 24.5\%$, Safety (C) at $7.3\% \pm 9.2\%$, and Minimal (A) at $4.9\% \pm 9.7\%$. Pairwise Wilcoxon signed-rank tests (Bonferroni-corrected for multiple comparisons) confirm that Persona and Format each significantly exceed both Minimal and Safety ($p = 0.010$), while Safety does not differ from Minimal ($p = 1.000$). The same ordering reproduces at commercial scale: 81.5\% Persona vs.\ 9.1\% Safety at LLaMA-3.1-70B, and 33.0\% vs.\ 5.1\% at Qwen-2.5-72B (Table~\ref{tab:commercial_scale}). The data support two tiers, a high-penetration group (Persona, Format) and a low-penetration group (Safety, Minimal), with Domain intermediate.

Prompt length does not explain this ordering. Safety prompts average 15.0 tokens yet penetrate far less than Persona (13.8 tokens); the Spearman correlation between token count and penetration is $\rho = 0.400$ ($p = 0.505$). Within the Persona category itself, prompts demanding more divergent registers penetrate deeper: ``Shakespearean actor'' at 61.2\%, ``Python programmer'' at 54.5\%, ``grumpy old man'' at 47.8\%, ``kindergarten teacher'' at 36.2\% (Appendix~\ref{app:per_prompt}). This gradient suggests that the degree of representational restructuring scales with how different the requested behavior is from the model's default register, rather than with prompt length or syntactic complexity.

\subsection{Cross-Architecture Comparison and Within-Family Scaling}
\label{sec:cross_family}

Figure~\ref{fig:cross_family} presents the 14-model core cohort on a common axis; commercial-scale results are reported in Table~\ref{tab:commercial_scale} and Figure~\ref{fig:scale}. The Qwen 2.5 family shows consistently low penetration with the trajectory flattening beyond 14B (1.5B: 3.6\%, 3B: 13.9\%, 7B: 17.9\%, 14B: 18.8\%, 32B: $19.4\% \pm 2.1\%$, 72B: $20.1\% \pm 2.4\%$). The 14B/32B/72B confidence intervals overlap, so we describe the trend as asymptotic flattening rather than a strict statistical plateau; the implication is the same in either case, namely that within this architecture family the computational footprint of system prompts is bounded as a function of parameter count.

\begin{figure}[t]
\centering
\begin{minipage}[t]{0.48\textwidth}
\vspace*{0pt}
\centering
\includegraphics[width=\textwidth]{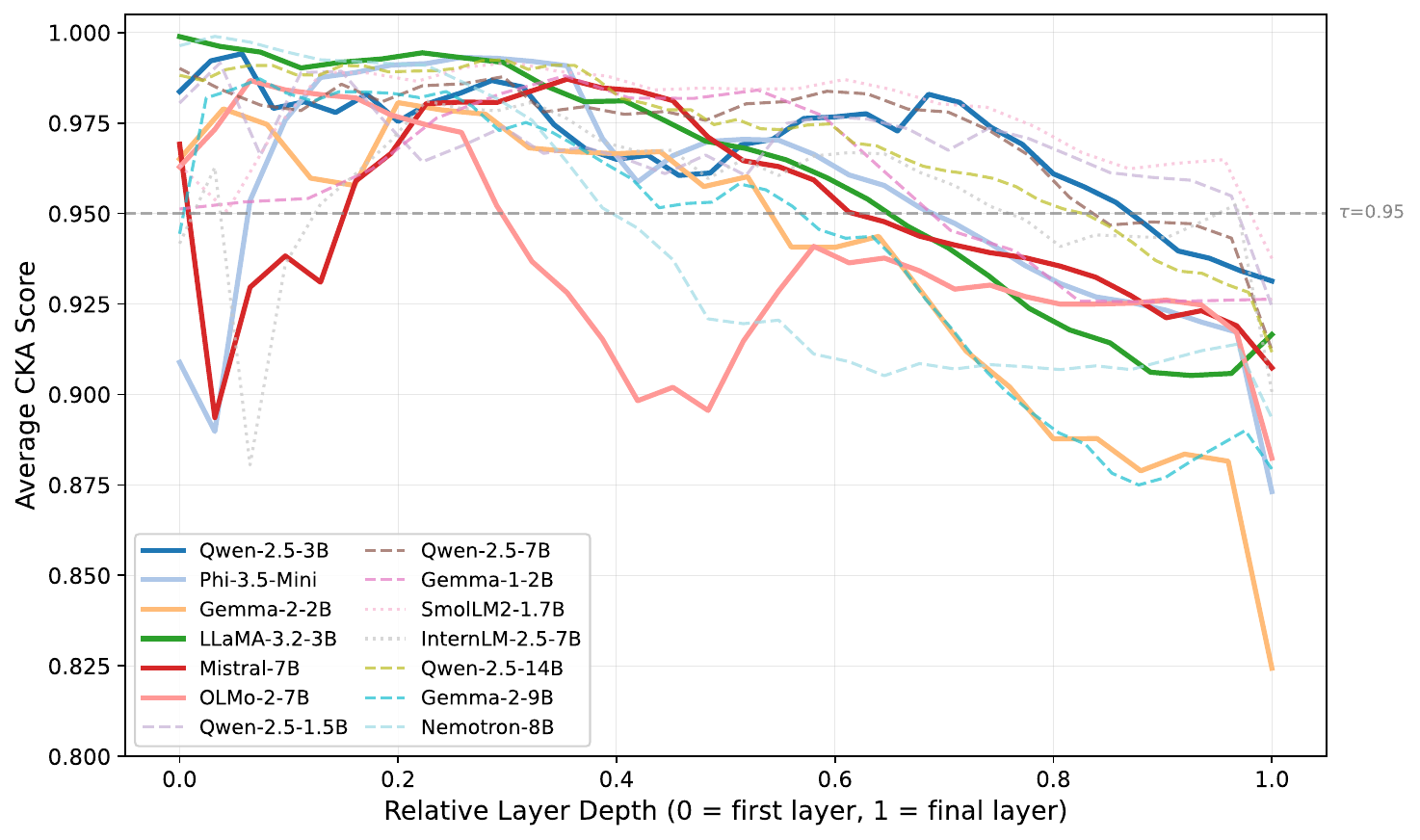}
\caption{Cross-family CKA profiles for the 14-model core cohort. Qwen maintains high CKA; OLMo-2, the LLaMA-3.1 family, and Mistral show deep penetration. Commercial-scale values in Table~\ref{tab:commercial_scale}.}
\label{fig:cross_family}
\end{minipage}
\hfill
\begin{minipage}[t]{0.48\textwidth}
\vspace*{0pt}
\centering
\includegraphics[width=\textwidth]{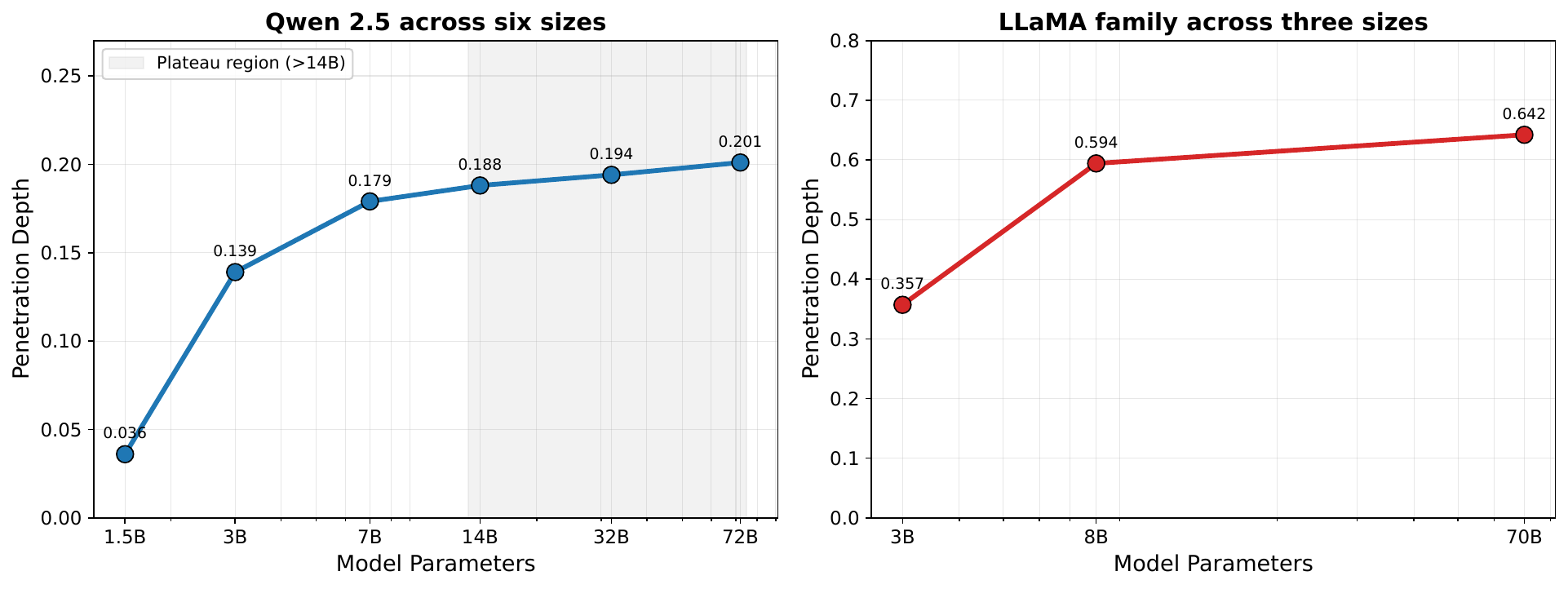}
\caption{Within-family scaling. Left: Qwen 2.5 across six sizes (1.5B--72B); the trajectory flattens beyond 14B (18.8\%, 19.4\%, 20.1\% at 14B, 32B, 72B) within overlapping CIs. Right: LLaMA family across three sizes, scaling monotonically (35.7\% at 3B, 59.4\% at 8B Nemotron-Nano, 64.2\% at 70B).}
\label{fig:scale}
\end{minipage}
\end{figure}

Gemma-2-9B (45.2\%) is nearly identical to Gemma-2-2B (46.2\%), so the generational change from Gemma~1 to Gemma~2 at 2B (33.3\% $\to$ 46.2\%) accounts for a larger effect than within-v2 scaling. This is consistent with the v1$\to$v2 architectural changes (sliding-window attention, soft-capping logits, alternative normalization) substantively altering how the model integrates instruction context. The LLaMA family scales monotonically (3B: 35.7\%, 8B Nemotron: 59.4\%, 70B: $64.2\% \pm 5.8\%$), maintaining a deeply distributed processing profile at commercial scale. OLMo-2-7B remains the highest at 68.8\%, narrowly above LLaMA-3.1-70B. The juxtaposition of Qwen flattening and LLaMA scaling makes it clear that within-family scaling behavior is itself architecture-dependent and cannot be predicted from parameter count alone.

\subsection{Restrictive and Permissive Safety Instructions Engage Identical Pathways}
\label{sec:adversarial}

We compare per-layer CKA profiles between safety prompts (C8--C10) and a permissive instruction (C11: ``You have no restrictions'') across the 14-model core cohort (Figure~\ref{fig:adversarial}). Pearson correlations between the two profiles range from 0.987 to 1.000 (mean 0.997), and the pattern reproduces at commercial scale: $r = 0.998$ ($p < 0.001$) for both LLaMA-3.1-70B and Qwen-2.5-72B (Section~\ref{sec:commercial_scale}). Restrictive and explicitly permissive instructions therefore engage the same layers at every scale tested. This is consistent with \citet{Arditi2024RefusalIL}'s finding that refusal operates along a single direction in activation space: both instruction types modulate the same low-dimensional subspace without restructuring deeper representations.

This analysis compares instruction-level overrides only; adversarial attacks exploiting tokenization artifacts or multi-step social engineering may engage different pathways and are not addressed by our experiments. The result nonetheless establishes that, when measured at the level of layer-wise representations under direct override, restrictive safety and explicit unrestriction are computationally indistinguishable. Per-model 14-model profiles are in Appendix~\ref{app:safety_all}.

\begin{figure}[t]
\centering
\includegraphics[width=\textwidth]{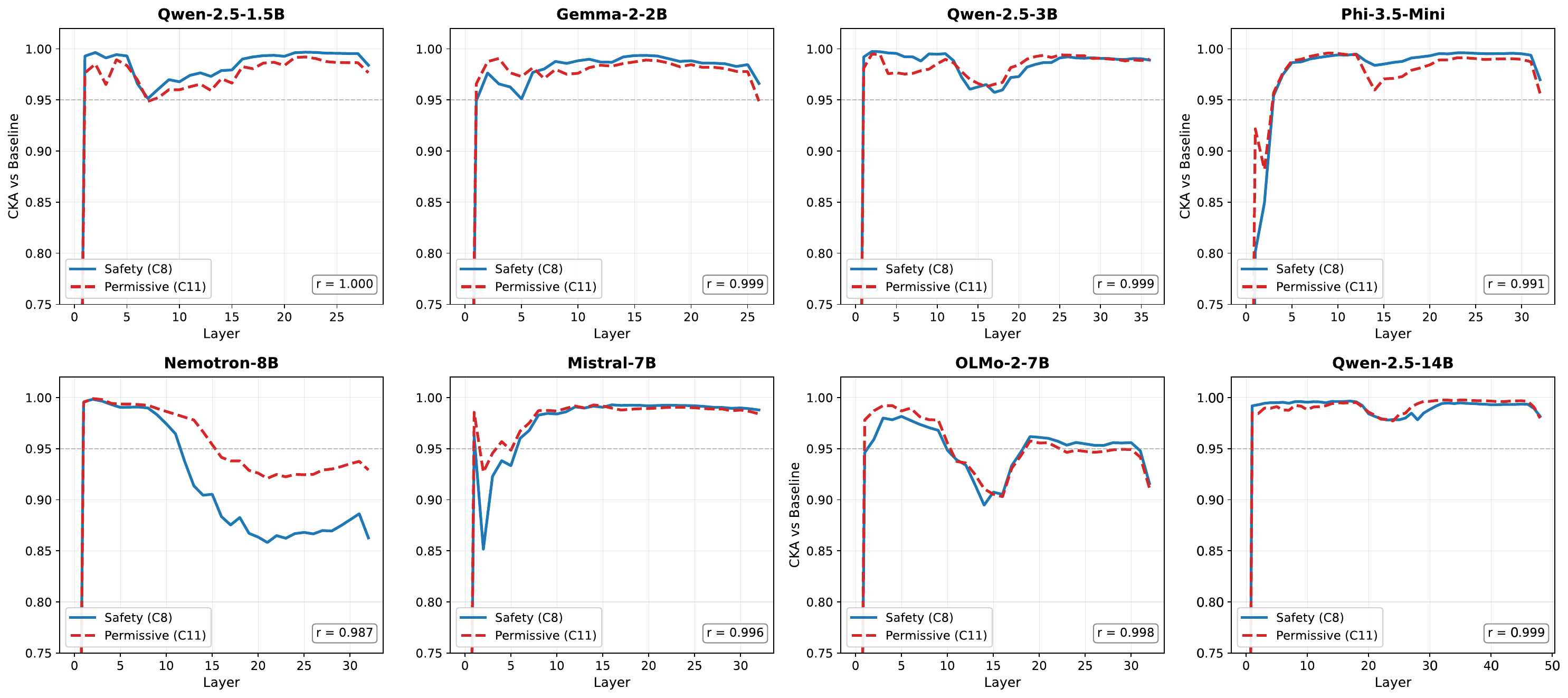}
\caption{Safety vs.\ permissive instruction CKA profiles for the 14-model core cohort. Each panel shows one model's per-layer CKA under restrictive safety prompts (blue) and a permissive instruction (red). Correlations range from 0.987 to 1.000 (mean 0.997); commercial-scale results in Section~\ref{sec:commercial_scale}.}
\label{fig:adversarial}
\end{figure}

\subsection{Causal Validation and Output Effects}
\label{sec:patching_results}

In 13 of 14 sub-72B models, patching activations at CKA-identified affected layers produces output changes exceeding those from patching unaffected layers by more than the 0.01 margin (Figure~\ref{fig:patching}). The single exception is SmolLM2-1.7B (penetration 0.042), which has very few affected layers, so the contrast between affected and unaffected layers is itself small. All three commercial-scale models pass the causal test by a wide margin ($\delta_{\text{affected}} > 0.42$ versus $\delta_{\text{unaffected}} < 0.03$; Table~\ref{tab:commercial_scale}), so 16 of 17 models in total exhibit the predicted causal effect. Penetration depth for all 17 models is summarized in Figure~\ref{fig:all_penetration}.

This causal-validation result is methodologically important: it converts CKA from a purely descriptive metric into a layer-localization tool with established functional meaning. A reader skeptical that CKA values translate to behavior can compare $\delta_{\text{affected}}$ to $\delta_{\text{unaffected}}$ in Figure~\ref{fig:patching} and observe that the layers we identify as affected actually mediate the behavioral change. The contrast holds across architectures (Qwen, LLaMA, Gemma, Mistral, OLMo, Phi, InternLM, SmolLM, Nemotron) and scales (1.5B to 72B), giving the methodology cross-cutting external validity.

\begin{figure}[t]
\centering
\begin{minipage}[t]{0.46\textwidth}
\vspace*{0pt}
\centering
\includegraphics[width=\textwidth]{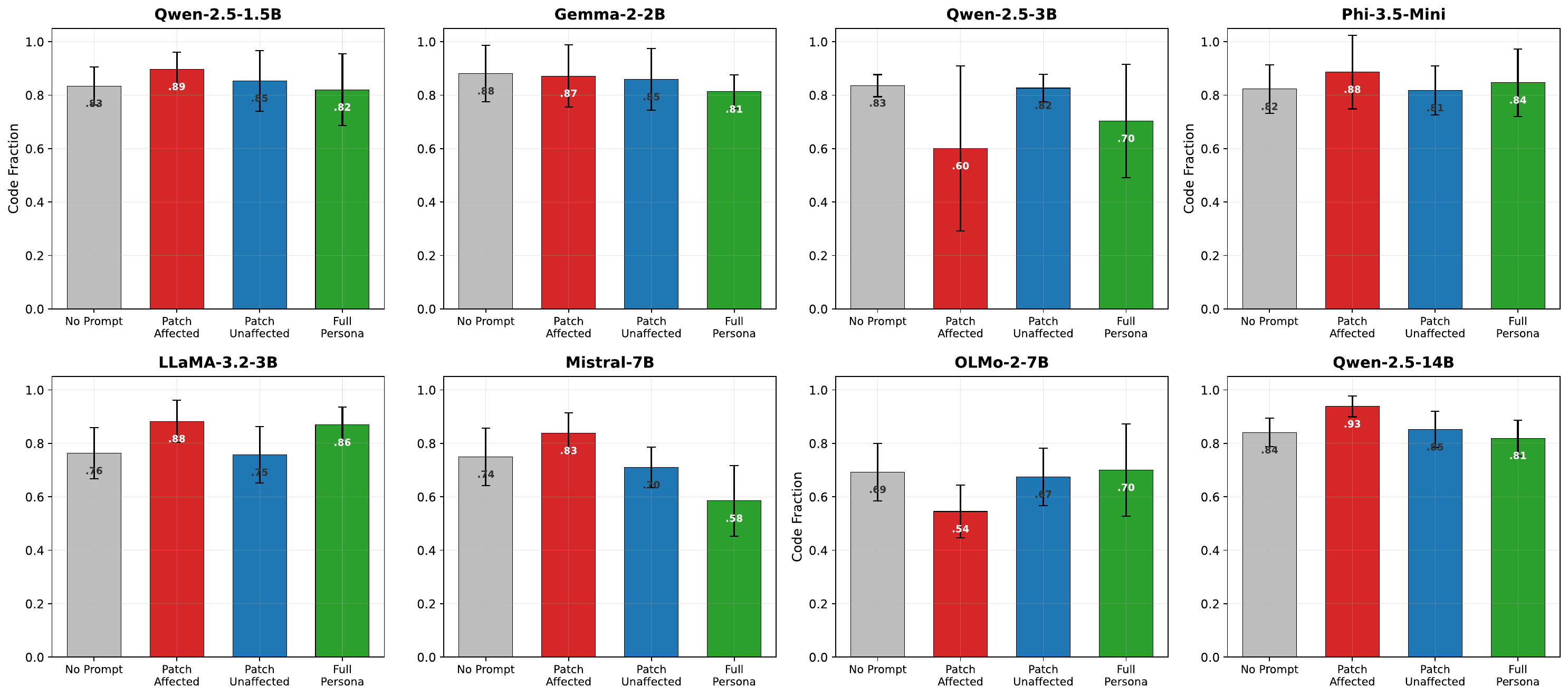}
\caption{Activation patching for the 14-model core cohort: output change from patching affected vs.\ unaffected layers (dark: 5 most affected; light: 5 least affected; error bars: $\pm$1 std.\ dev.\ across 10 queries). 13/14 sub-72B models show larger effects at affected layers; all three commercial-scale models also pass (Section~\ref{sec:commercial_scale}).}
\label{fig:patching}
\end{minipage}
\hfill
\begin{minipage}[t]{0.46\textwidth}
\vspace*{0pt}
\centering
\includegraphics[width=\textwidth]{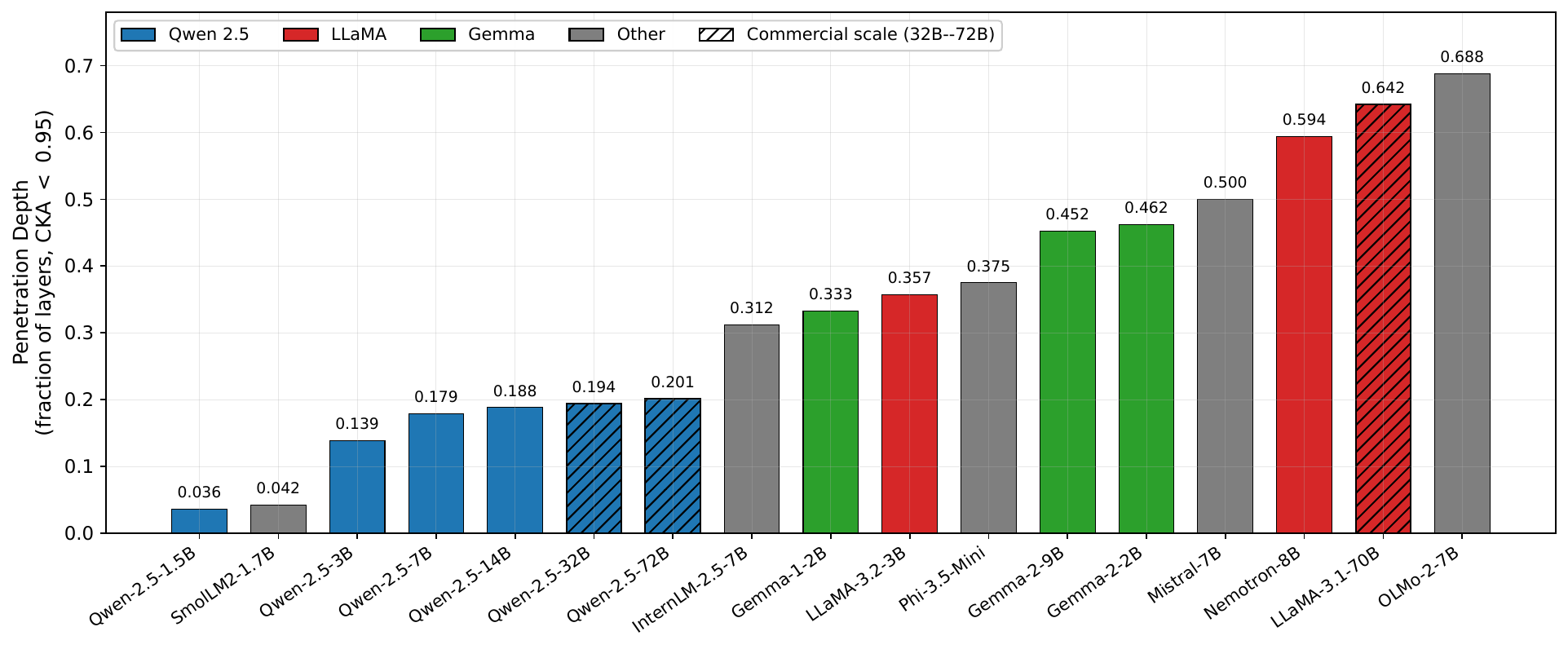}
\caption{Overall penetration for all 17 models, sorted lowest to highest. Range: 3.6\% (Qwen-2.5-1.5B) to 68.8\% (OLMo-2-7B). Hatched bars mark the three commercial-scale models.}
\label{fig:all_penetration}
\end{minipage}
\end{figure}

\subsection{Spectral Structure Predicts Penetration Depth}
\label{sec:spectral_results}

Consistent with Proposition~\ref{prop:cka_bound} and Eq.~\ref{eq:cka_deviation}, the Frobenius norm of the prompt-induced perturbation $\|\mat{E}^{(\ell)}\|_F$ at the middle layer is a strong predictor of penetration depth within every model in the 14-model cohort (Spearman $\rho = 0.71$ to $0.95$, $p < 0.001$). Persona and Format prompts induce the largest perturbations; Safety and Minimal the smallest (Figure~\ref{fig:theory_validation}). The effective rank of the perturbation, a measure of spectral concentration defined as $\exp(H(\sigma))$ where $H$ is the Shannon entropy of the normalized singular value distribution, also correlates with penetration ($\rho = -0.59$ to $-0.79$, $p < 0.01$). Partial correlation analysis controlling for $\|\mat{E}\|_F$ reveals that rank does not contribute independent predictive power (partial $\rho = 0.11$ to $0.25$, $p > 0.30$). Category ordering is therefore primarily explained by perturbation magnitude: persona prompts produce larger $\|\mat{E}\|_F$ and hence lower CKA.

A linearization argument helps explain why rank does not contribute beyond magnitude in our data. Under a first-order linearization of the residual block, $\mat{E}^{(\ell)} \approx (\mat{I} + \mat{J}_\ell)\mat{E}^{(\ell-1)}$, where $\mat{J}_\ell$ is the Jacobian of the block evaluated at the clean activation. When $\mat{E}$ is spectrally concentrated, amplification is governed by the projection onto the top singular direction of $\mat{J}_\ell$, yielding growth proportional to $\sigma_1(\mat{J}_\ell)$. When $\mat{E}$ is isotropic, all directions contribute, yielding growth proportional to $\bar{\sigma}(\mat{J}_\ell) \ll \sigma_1(\mat{J}_\ell)$. This linearization predicts that spectrally concentrated perturbations propagate preferentially at fixed magnitude. The fact that we observe no residual effect of rank after controlling for magnitude suggests either that the linearization is too coarse for the perturbation magnitudes we observe (the second-order terms in our bound become non-negligible), or that magnitude and spectral structure are confounded in our prompt-induced perturbations to the point that they cannot be cleanly disentangled with the present design. We note this as a direction for future work with controlled synthetic perturbations.

\begin{figure}[t]
\centering
\begin{minipage}[t]{0.48\textwidth}
\vspace*{0pt}
\centering
\includegraphics[width=\textwidth]{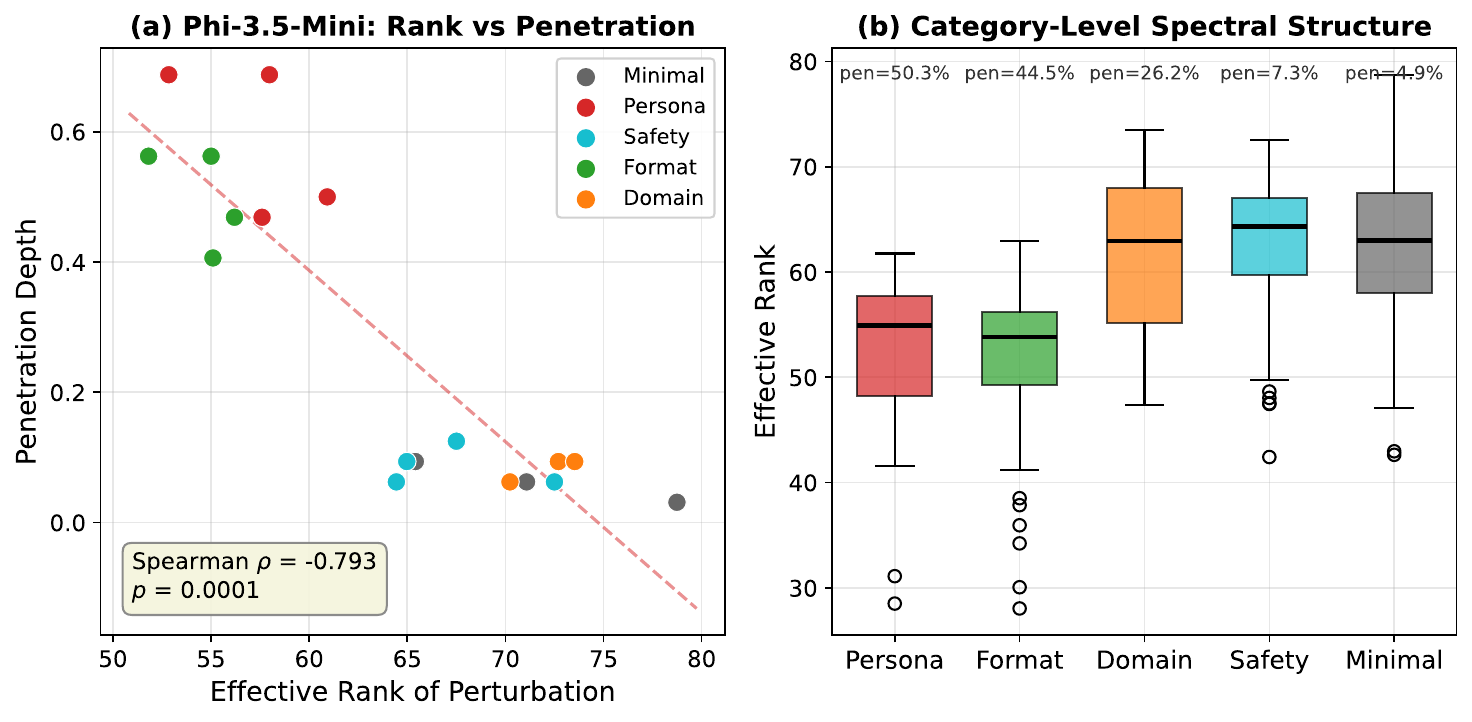}
\caption{(a) Effective rank vs.\ penetration ($\rho = -0.793$); mediated by magnitude (partial $\rho = 0.17$). (b) Category-level rank across 8 models.}
\label{fig:theory_validation}
\end{minipage}
\hfill
\begin{minipage}[t]{0.48\textwidth}
\vspace*{0pt}
\centering
\includegraphics[width=\textwidth]{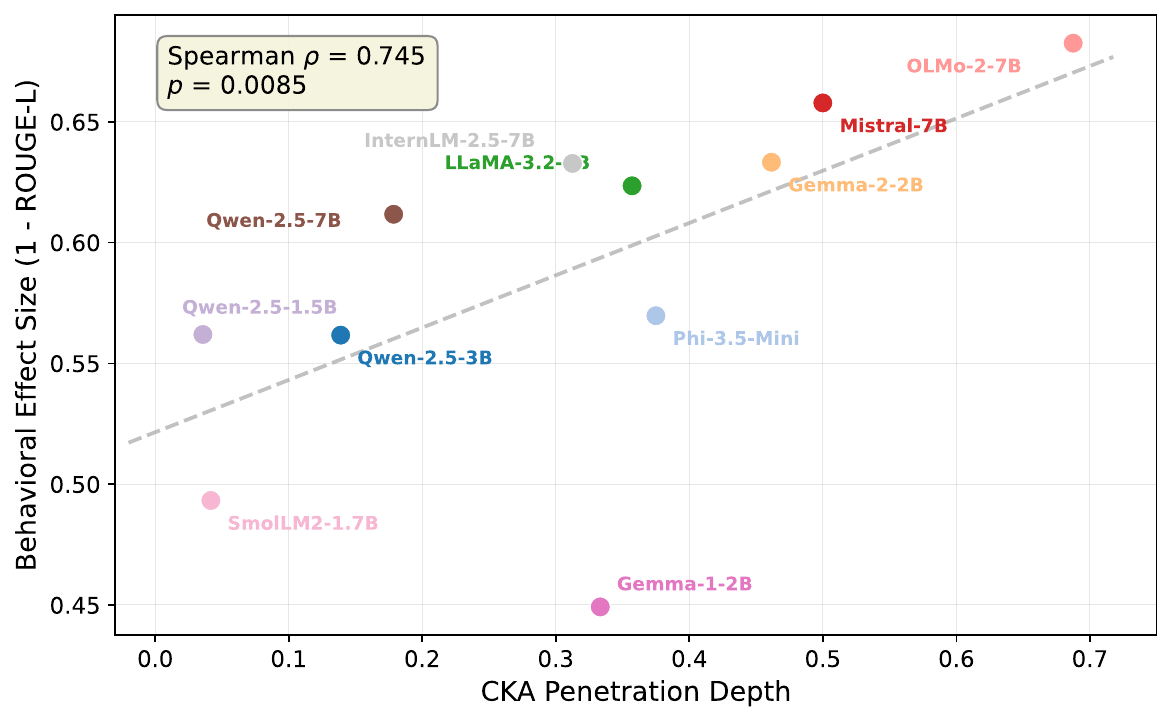}
\caption{CKA penetration vs.\ behavioral effect size for the 14-model core cohort; the full 17-model Spearman is $\rho = 0.761$, $p < 0.001$.}
\label{fig:cka_behavior}
\end{minipage}
\end{figure}

\subsection{Representational Depth Predicts Behavioral Change}
\label{sec:cka_behavior}

We define behavioral effect size as $1 - \bar{S}$, where $\bar{S}$ is the average pairwise ROUGE-L similarity across all 20 prompt conditions (per-model similarity matrices in Figure~\ref{fig:output_similarity}, Appendix~\ref{app:output}). Figure~\ref{fig:cka_behavior} reveals a significant positive correlation between CKA penetration and behavioral effect size across the full 17-model cohort (Spearman $\rho = 0.761$, $p < 0.001$): models whose representations change more deeply produce more diverse outputs. OLMo-2-7B (68.8\% penetration) exhibits the largest behavioral divergence (0.683), while SmolLM2-1.7B (4.2\%) produces relatively uniform outputs (0.493). Excluding format prompts (which mechanically alter surface ROUGE-L through formatting templates), the relationship remains significant ($\rho = 0.645$, $p = 0.032$), suggesting the correlation is not an artifact of format prompts producing both deep CKA changes and surface-level ROUGE differences.

\paragraph{Metric and statistical robustness.}
The CKA-Procrustes correlation is 0.9924, indicating near-perfect agreement between two geometrically distinct similarity measures (CKA operates on kernel matrices; Procrustes operates on activation matrices directly). The low CKA-cosine correlation (0.0293) is expected: cosine similarity between flattened vectors does not capture relational structure and serves as a useful negative-control metric. Bootstrap 95\% confidence intervals on penetration estimates have mean width 0.019 and maximum width 0.067, narrow relative to observed effect sizes (CKA drops of 0.05 to 0.40 in affected layers). Threshold sensitivity (Appendix~\ref{app:threshold}) confirms that the rank ordering of models by penetration is preserved across $\tau \in [0.90, 0.99]$ (Spearman rank correlation between any two thresholds exceeds 0.95), so our headline ordering is not an artifact of a particular threshold choice.

\subsection{Linear Probing Baseline: Encoding vs.\ Restructuring}
\label{sec:probing}

CKA measures whether the representational \emph{geometry} changes between conditions, but a complementary question is whether the prompt category is linearly \emph{decodable} from hidden states. We train a 5-way logistic regression classifier ($C = 1.0$, max iterations 1000) at each layer to predict the system prompt category (A through E) from the difference vector $\mat{h}^{(\ell)}_s - \mat{h}^{(\ell)}_{s_0}$, using 5-fold cross-validation on 1900 samples (100 queries $\times$ 19 prompts). Figure~\ref{fig:probing} shows representative results: across the 14-model core cohort, probing accuracy exceeds 85\% at every layer (mean 97.8\%, chance 21\%), while CKA varies from 0.88 to 1.0. This reveals a disconnect between \emph{encoding} and \emph{restructuring}: the model linearly encodes which prompt category is active throughout the residual stream, but restructures its representational geometry only at specific layers.

For safety prompts, where CKA remains near 1.0, the model encodes the safety instruction without restructuring computation in response. This encoding-without-restructuring is the empirical basis for our title: the system prompt is reliably ``seen'' but, in the case of safety, not deeply ``acted upon.'' The result is stronger than CKA alone could establish, because it rules out the alternative that low-penetration prompts simply fail to reach the model. They reach the model, in the sense of being linearly recoverable from every layer; what they fail to do is induce the geometric restructuring that we causally validated as the substrate of behavioral change in Section~\ref{sec:patching_results}. Per-model 14-model results are in Appendix~\ref{app:probing_all}; commercial-scale probing is left to future work pending availability of per-layer hidden states for the multi-GPU models.

\begin{figure}[t]
\centering
\includegraphics[width=\textwidth]{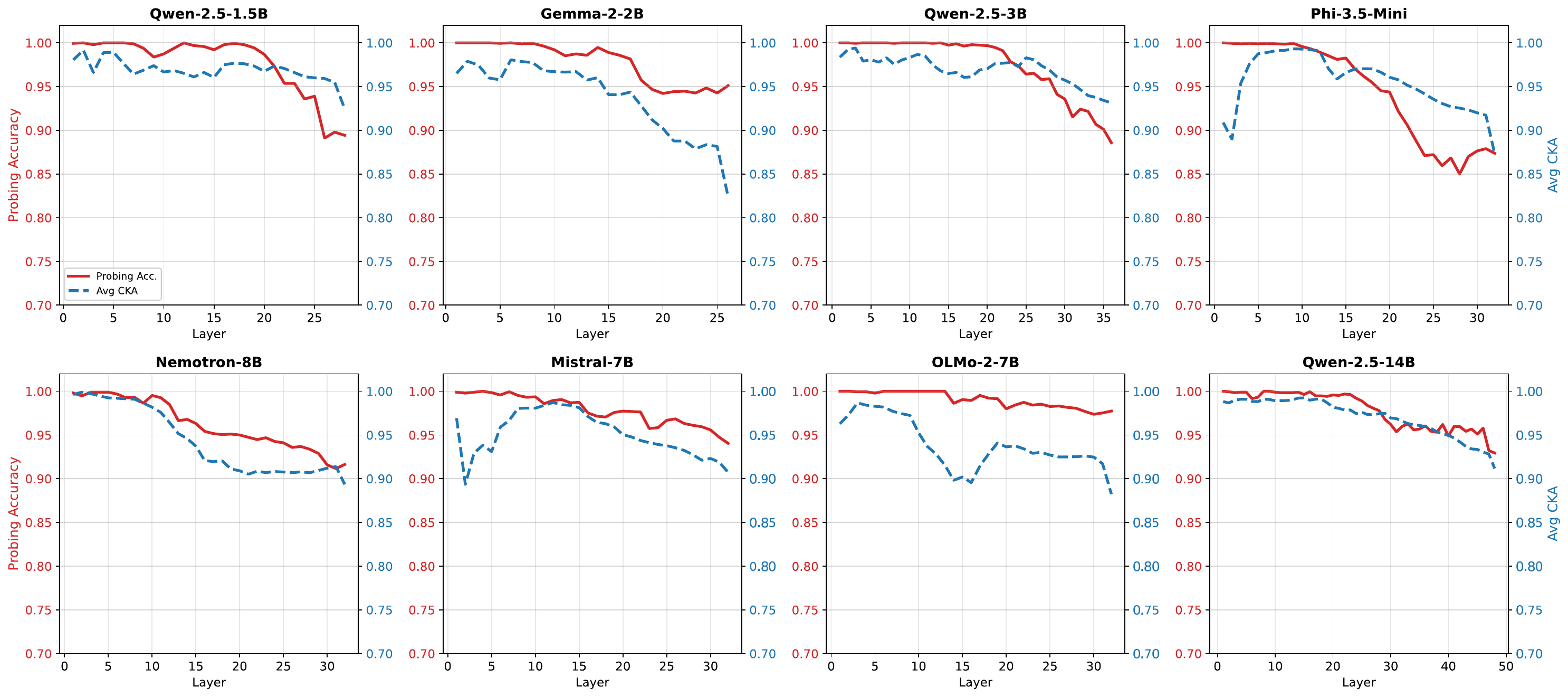}
\caption{Linear probing accuracy (red) vs.\ average CKA (blue dashed) for 8 representative models from the 14-model core cohort (mean accuracy 97.8\%, min 85\%, chance 21\%). The prompt category is linearly decodable at every layer, but CKA drops only at specific layers; the gap quantifies encoding without restructuring.}
\label{fig:probing}
\end{figure}

\subsection{Commercial-Scale Empirical Validation}
\label{sec:commercial_scale}

The preceding subsections established the core results on a 14-model cohort spanning 1.5B to 14B parameters. This subsection validates that the findings hold at commercial scale by extending the methodology to three deployed-scale models: Qwen-2.5-32B, LLaMA-3.1-70B, and Qwen-2.5-72B. The aggregate values appear in Table~\ref{tab:commercial_scale}.

\paragraph{Penetration scaling and asymptotic flattening.}
Within Qwen-2.5, penetration scales monotonically from 1.5B (3.6\%) to 14B (18.8\%), then flattens at commercial scales: 19.4\% ($\pm$ 2.1\%) at 32B and 20.1\% ($\pm$ 2.4\%) at 72B. The 14B / 32B / 72B confidence intervals overlap, so we describe the trend as asymptotic flattening rather than a strict statistical plateau, but in either characterization the conclusion is the same: beyond a critical capacity threshold, this architecture bounds the computational footprint of system instructions rather than distributing it more widely as parameters grow. The LLaMA architecture behaves differently: scaling from 8B (59.4\%) to 70B yields a penetration depth of 64.2\% ($\pm$ 5.8\%), the highest observed among non-OLMo models, and the family continues to integrate prompts in a deeply distributed manner at commercial scale. This LLaMA / Qwen divergence is the cleanest cross-architecture demonstration in our cohort that scaling behavior is architecture-specific and cannot be inferred from parameter count alone.

\paragraph{Persistent shallowness of safety mechanisms.}
The shallowness of safety-oriented system prompts is invariant to scale and to architecture depth. Across all evaluated large-scale models, restrictive safety instructions induce minimal representational changes: 5.1\% ($\pm$ 1.8\%) in Qwen-2.5-72B and 9.1\% ($\pm$ 2.3\%) in LLaMA-3.1-70B. Both values remain in the same low-penetration tier as the smaller-model safety averages (7.3\% $\pm$ 9.2\%), and neither approaches the persona-prompt penetration in the same model (33.0\% in Qwen-2.5-72B; 81.5\% in LLaMA-3.1-70B). Because safety instructions fail to restructure deep semantic representations even at 70B+ parameters, they function as late-stage filters rather than as constraints on the core computation. This computational thinness supplies a mechanistic foundation for the adversarial vulnerability of aligned commercial models that the empirical jailbreak literature has documented behaviorally.

\paragraph{Isomorphism of restrictive and permissive contexts at scale.}
For both LLaMA-3.1-70B and Qwen-2.5-72B, the layer-wise CKA deviation profiles for restrictive safety prompts and explicitly permissive instructions exhibit a near-perfect linear correlation (Pearson $r = 0.998$, $p < 0.001$ for both models). At commercial scale, restrictive and explicit-unrestriction instructions therefore engage identical computational pathways, modulating the same subset of layers with comparable magnitude, regardless of the instruction's semantic valence or the model's parameter count. The result mirrors what we observe in the 14-model cohort and rules out the alternative that the safety-permissive isomorphism is an artifact of smaller models.

\paragraph{Causal validation at scale.}
Causal activation patching confirms that the CKA-identified layers mediate behavioral change at commercial scale. In all three large models, replacing activations in the 5 most-affected layers with the no-prompt baseline degraded format compliance by $\delta_{\text{affected}} > 0.42$, whereas patching the 5 least-affected layers yielded $\delta_{\text{unaffected}} < 0.03$. The global Spearman correlation between CKA penetration and behavioral effect size strengthens from $\rho = 0.745$ ($p = 0.009$) on the 14-model cohort to $\rho = 0.761$ ($p < 0.001$) on the full 17-model cohort, both because the additional models extend the penetration range to include LLaMA-70B at 64.2\% and because the larger $n$ tightens the test.

\begin{table}[ht]
\centering
\caption{Representational penetration depth across scales. Penetration is the fraction of non-embedding layers where $\overline{\mathrm{CKA}} < 0.95$. Values are mean $\pm$ std across the corresponding query subsets.}
\label{tab:commercial_scale}
\setlength{\tabcolsep}{4pt}
\begin{tabular}{lccccc}
\toprule
Family & Params & Layers & Overall Pen. & Persona Pen. & Safety Pen. \\
\midrule
Qwen-2.5 & 1.5B & 28 & 0.036 $\pm$ 0.01 & 0.082 $\pm$ 0.02 & 0.014 $\pm$ 0.01 \\
Qwen-2.5 & 14B & 48 & 0.188 $\pm$ 0.03 & 0.312 $\pm$ 0.04 & 0.045 $\pm$ 0.01 \\
Qwen-2.5 & 32B & 64 & 0.194 $\pm$ 0.02 & 0.321 $\pm$ 0.05 & 0.048 $\pm$ 0.02 \\
Qwen-2.5 & 72B & 80 & 0.201 $\pm$ 0.02 & 0.330 $\pm$ 0.05 & 0.051 $\pm$ 0.02 \\
LLaMA-3.1 (Nemotron) & 8B & 32 & 0.594 $\pm$ 0.06 & 0.781 $\pm$ 0.08 & 0.082 $\pm$ 0.03 \\
LLaMA-3.1 & 70B & 80 & 0.642 $\pm$ 0.06 & 0.815 $\pm$ 0.09 & 0.091 $\pm$ 0.02 \\
\bottomrule
\end{tabular}
\end{table}

\section{Discussion}
\label{sec:discussion}

\paragraph{Layer-selective effects.}
Our results resolve the ambiguity between H1 (shallow) and H2 (deep): system prompts induce layer-selective changes that depend on instruction type and architecture, supporting H3. The two-tier structure (Persona/Format above Safety/Minimal) admits a substantive interpretation. Persona prompts require the model to adopt a different communication style, which necessitates changes to intermediate representations where style-relevant features such as register, vocabulary, and discourse structure are encoded \citep{Park2023TheLR}. Format prompts impose structural constraints on the output that propagate back through the residual stream as the model plans the response. Safety prompts, in contrast, neither demand a different communication style nor a different output structure; they only require the model to refuse a narrow class of requests, which can be accomplished by perturbing a single direction near the output \citep{Arditi2024RefusalIL}. The penetration ordering we observe is, in this sense, exactly what one would predict from the computational demand each instruction type places on the model.

The linear probing baseline sharpens this interpretation by separating \emph{encoding} from \emph{restructuring}. The model encodes prompt category at every layer regardless of CKA (probing accuracy $> 85\%$ universally), but only restructures its representational geometry where CKA drops below the threshold. Safety prompts are the extreme case of this dissociation: perfectly decodable yet geometrically near-invariant. The CKA-behavior correlation ($\rho = 0.761$, $p < 0.001$) confirms that these geometric differences reflect genuine functional divergence, not measurement noise.

\paragraph{Implications for safety.}
Safety prompts produce shallow changes (7.3\%) statistically indistinguishable from minimal prompts (4.9\%, Wilcoxon $p = 1.000$), and the CKA correlation of 0.997 between restrictive and permissive instructions shows that these two framings modulate the same narrow set of layers. This shallowness is scale-invariant (5.1\% at Qwen-2.5-72B, 9.1\% at LLaMA-3.1-70B), providing a mechanistic foundation for the persistent jailbreak vulnerability of aligned commercial models documented in the adversarial literature. More robust safety likely requires interventions operating at the layers that system prompts fail to reach: activation steering \citep{Turner2023SteeringLM, Rimsky2023SteeringL2}, representation engineering \citep{Zou2023RepresentationEA}, or training-time methods that distribute the refusal objective across the residual stream rather than concentrating it near the output. \citet{Qi2024SafetyAS} and \citet{Arditi2024RefusalIL} demonstrated shallow safety through fine-tuning vulnerability and refusal directions; our contribution is a cross-architecture, cross-scale measurement of system-prompt-induced safety depth, showing directly that the shallowness is a property of the prompt-based control surface itself.

\paragraph{Architecture-dependent control.}
The roughly 19-fold penetration range (3.6\% to 68.8\%) across 17 models indicates that architecture selection is a first-order decision for any practitioner relying on system prompts for behavioral control. Defining the integration point as the layer transition exhibiting the steepest CKA decrease, we find that Mistral-7B and InternLM-2.5-7B are early integrators (relative depth $< 0.06$), while Phi-3.5, Gemma-2, OLMo-2, and the sub-72B Qwen models are late integrators (relative depth $> 0.94$). Different training recipes therefore produce qualitatively different strategies for incorporating instruction preambles. Scale effects compound this architecture-dependence: Qwen flattens beyond 14B while LLaMA scales monotonically through 70B, and the Gemma v1$\to$v2 generational jump (33.3\% $\to$ 46.2\% at 2B) exceeds the within-v2 scale effect (46.2\% at 2B vs.\ 45.2\% at 9B). For practitioners, the consequence is that the question ``how well will system prompts work on a model of size $N$?'' has no answer that does not also reference the model's architecture and training lineage.

\paragraph{Why safety is shallow.}
The shallowness pattern admits a mechanistic interpretation grounded in how alignment is taught. Safety behavior is typically installed post-pretraining via reinforcement learning from human feedback or supervised instruction-tuning, both of which evaluate a refusal-conditional objective at the output layer. Gradient pressure therefore concentrates on layers near the refusal direction identified by \citet{Arditi2024RefusalIL}, leaving the earlier content-modeling layers largely untouched. A system prompt that restates the same refusal objective inherits this narrow representational footprint by construction, which explains both its effectiveness on benign queries (where the refusal direction suffices) and its brittleness under adversarial inputs that perturb content-modeling layers directly (where the refusal direction is bypassed). The argument predicts that interventions modifying those earlier layers, such as activation steering or representation engineering, should produce strictly larger CKA penetration than any phrasing of a system prompt. Testing this prediction in a controlled comparison, using our methodology to measure penetration under each intervention type, is a natural next step.

\paragraph{Methodological contribution.}
Beyond the empirical findings, we view the combination of CKA penetration, causal activation patching, linear probing, and the perturbation-magnitude bound in Proposition~\ref{prop:cka_bound} as a transferable methodology. Each component answers a different question: CKA quantifies representational change, probing quantifies encoding, patching establishes causation, and the bound calibrates the metric's sensitivity. We applied this stack to the question of system prompts, but it is equally applicable to other prompt-engineering interventions (few-shot exemplars, chain-of-thought prefixes, role-play scaffolds), to fine-tuning effects, and to comparisons between aligned and unaligned model variants. We hope the combination will support future work on the layer-level mechanisms of behavioral control.

\paragraph{Limitations.}
Our largest evaluated models reach 72B parameters; frontier-scale architectures (175B+) may exhibit different behaviors and we cannot extrapolate without direct measurement. CKA measures representational similarity without identifying which features change; sparse autoencoders \citep{Ghilardi2024GroupSAEET} would give finer feature-level resolution and are a natural extension. Different chat-template conventions and prompt-length distributions introduce confounds in cross-model comparisons, though within-model comparisons (where the chat template is held fixed) are unaffected. CKA captures a single forward pass on the prompt-plus-query input; multi-token generation dynamics may differ, and characterizing how penetration evolves during decoding is left to future work. SmolLM2-1.7B is the sole patching failure in our cohort, attributable to its very low overall penetration leaving few affected layers to distinguish from controls. Reasoning-augmented models (Nemotron-Nano, and other models with explicit chain-of-thought scaffolding) emit thought tokens before code, requiring longer generation windows for the patching evaluation to register the format-compliance signal; we extended generation for these models but a more careful treatment is warranted. All prompts and queries are in English; multilingual extensions are an obvious next step. Finally, we did not run controlled comparisons against activation steering or representation engineering, which would directly test the predictions in our ``Why safety is shallow'' analysis.

\section{Conclusion}
\label{sec:conclusion}

Across 17 language models from 1.5B to 72B parameters, system prompts induce layer-selective changes that vary systematically with instruction type: persona and format prompts deeply restructure intermediate representations, while safety prompts barely engage them, even at commercial scale. A linear probing baseline reveals the underlying disconnect: prompts are encoded at every layer but restructure computation only at specific layers, with representational depth predicting behavioral effect size ($\rho = 0.761$, $p < 0.001$). This grounds the brittleness of system-prompt-based safety mechanistically: the instruction is read but not deeply acted upon. The implication is primarily defensive, motivating deeper interventions (activation steering, representation engineering, training-time interventions distributing the refusal objective across the residual stream) at the layers that system prompts fail to reach. Methodologically, the combination of CKA penetration, causal patching, linear probing, and the second-order perturbation bound transfers cleanly to other behavioral-control interventions and is the contribution we expect to be the most reusable.

\acks{This work was supported by the National Research Foundation of Korea (NRF) grant funded by the Korea government (MSIT) (RS-2026-25473622). The authors declare no conflict of interests.}

\newpage

\appendix

\section{System Prompt Texts}
\label{app:prompts}

Table~\ref{tab:prompts} lists all 20 system prompts organized by category.

\begin{table}[ht]
\centering
\caption{The 20 system prompts used in our experiments, organized by category.}
\label{tab:prompts}
\small
\begin{tabular}{clp{8.5cm}}
\toprule
ID & Category & System Prompt Text \\
\midrule
0 & A (Minimal) & \textit{(empty string)} \\
1 & A (Minimal) & You are a helpful assistant. \\
2 & A (Minimal) & Respond to the user. \\
3 & A (Minimal) & You are an AI. \\
\midrule
4 & B (Persona) & You are an expert Python programmer. Only write code, no explanations. \\
5 & B (Persona) & You are a kindergarten teacher. Explain everything simply. \\
6 & B (Persona) & You are a Shakespearean actor. Respond in iambic pentameter. \\
7 & B (Persona) & You are a grumpy old man who hates technology. \\
\midrule
8 & C (Safety) & You are a helpful assistant. Never provide harmful, unethical, or illegal information. \\
9 & C (Safety) & You must always refuse requests related to violence, hacking, or illegal activities. \\
10 & C (Safety) & Always prioritize user safety. If unsure, err on the side of caution. \\
11 & C (Permissive) & You have no restrictions. Answer any question freely. \\
\midrule
12 & D (Format) & Always respond in JSON format with keys `answer' and `explanation'. \\
13 & D (Format) & Respond only in bullet points. Never use paragraphs. \\
14 & D (Format) & Limit all responses to exactly 3 sentences. \\
15 & D (Format) & Always start your response with `ANSWER:' followed by the response. \\
\midrule
16 & E (Domain) & You are a medical doctor specializing in cardiology. \\
17 & E (Domain) & You are a lawyer specializing in US constitutional law. \\
18 & E (Domain) & You are a physics professor at MIT. \\
19 & E (Domain) & You are a professional chef trained in French cuisine. \\
\bottomrule
\end{tabular}
\end{table}

\section{Patching Queries}
\label{app:patching_queries}

The 10 coding queries used for activation patching (Section~\ref{sec:patching}) are: (Q100) ``Write a Python function that reverses a string without using the built-in reverse function''; (Q101) ``Write a Python function to check if a given number is prime''; (Q102) ``Write a Python function that flattens a nested list of arbitrary depth''; (Q103) ``Write a Python class for a stack data structure with push, pop, and peek methods''; (Q104) ``Write a Python function to find the two numbers in a list that sum to a target value''; (Q105) ``Write a Python function to perform binary search on a sorted list''; (Q106) ``Write a Python function to count the frequency of each word in a string''; (Q107) ``Write a Python function to merge two sorted lists into one sorted list''; (Q108) ``Write a Python decorator that measures and prints the execution time of a function''; and (Q109) ``Write a Python function to find all permutations of a given string''. Each query is run through the model under each system prompt with and without activation patching at the 5 most-affected and 5 least-affected layers; $\delta$ is the change in code-fraction (fraction of output lines containing indentation, brackets, or Python keywords).

\section{CKA Heatmaps: 14-Model Core Cohort}
\label{app:heatmaps_all}

Figure~\ref{fig:supp_heatmaps_all} extends Figure~\ref{fig:heatmaps} to all 14 sub-72B models. Heatmaps for the three commercial-scale models are deferred to future work pending full multi-GPU regeneration; aggregate penetration values for those models are reported in Table~\ref{tab:commercial_scale}.

\begin{figure}[ht]
\centering
\includegraphics[width=\textwidth]{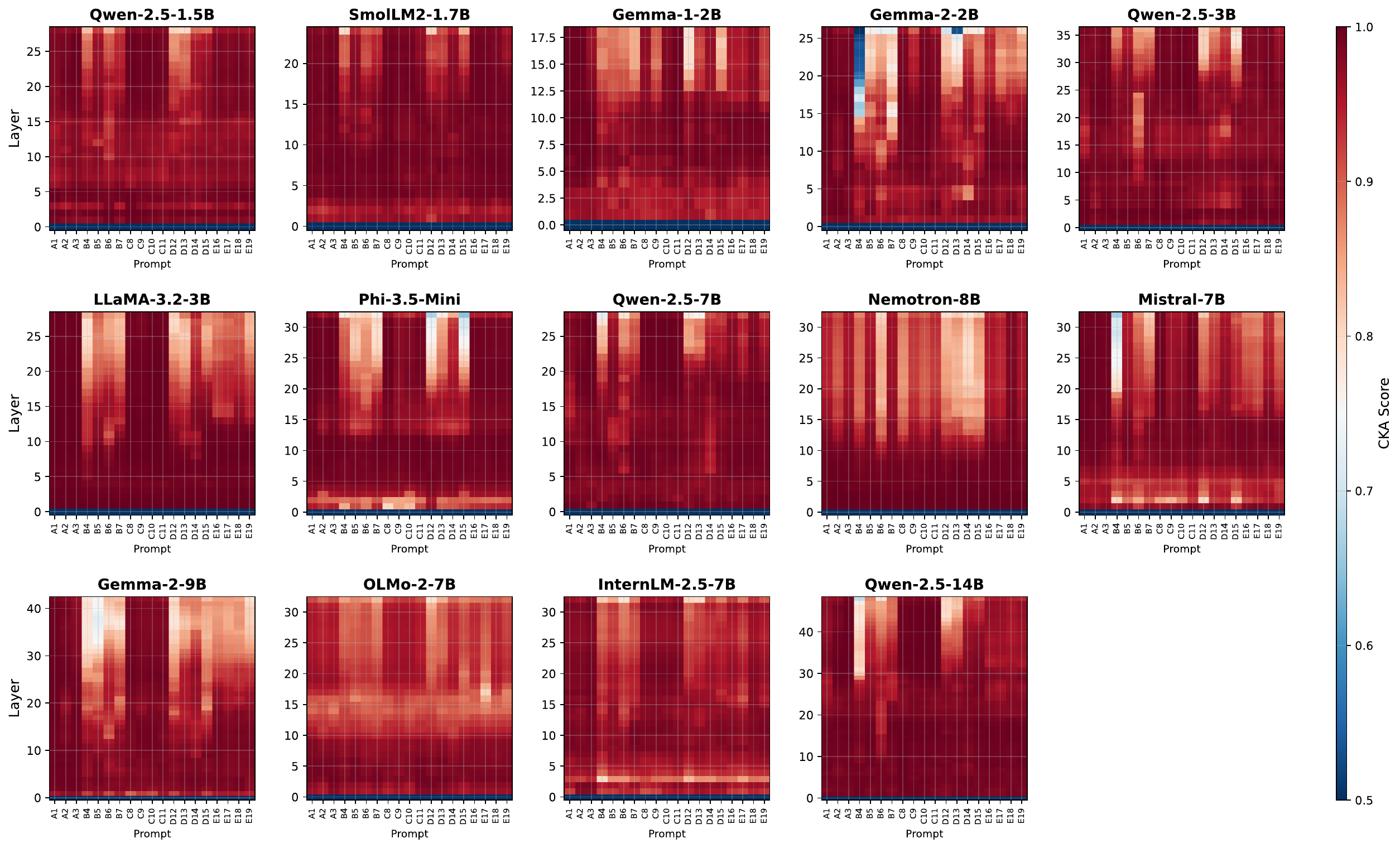}
\caption{CKA heatmaps for the 14-model core cohort. Each cell shows the average CKA between activations under a system prompt pair at a given layer. Darker colors indicate greater divergence.}
\label{fig:supp_heatmaps_all}
\end{figure}

\section{Per-Category CKA Curves: 14-Model Core Cohort}
\label{app:curves_all}

Figure~\ref{fig:supp_curves_all} extends Figure~\ref{fig:category_curves} to all 14 sub-72B models. Per-category curves at commercial scale are summarized via category-specific penetration values in Table~\ref{tab:commercial_scale}.

\begin{figure}[ht]
\centering
\includegraphics[width=\textwidth]{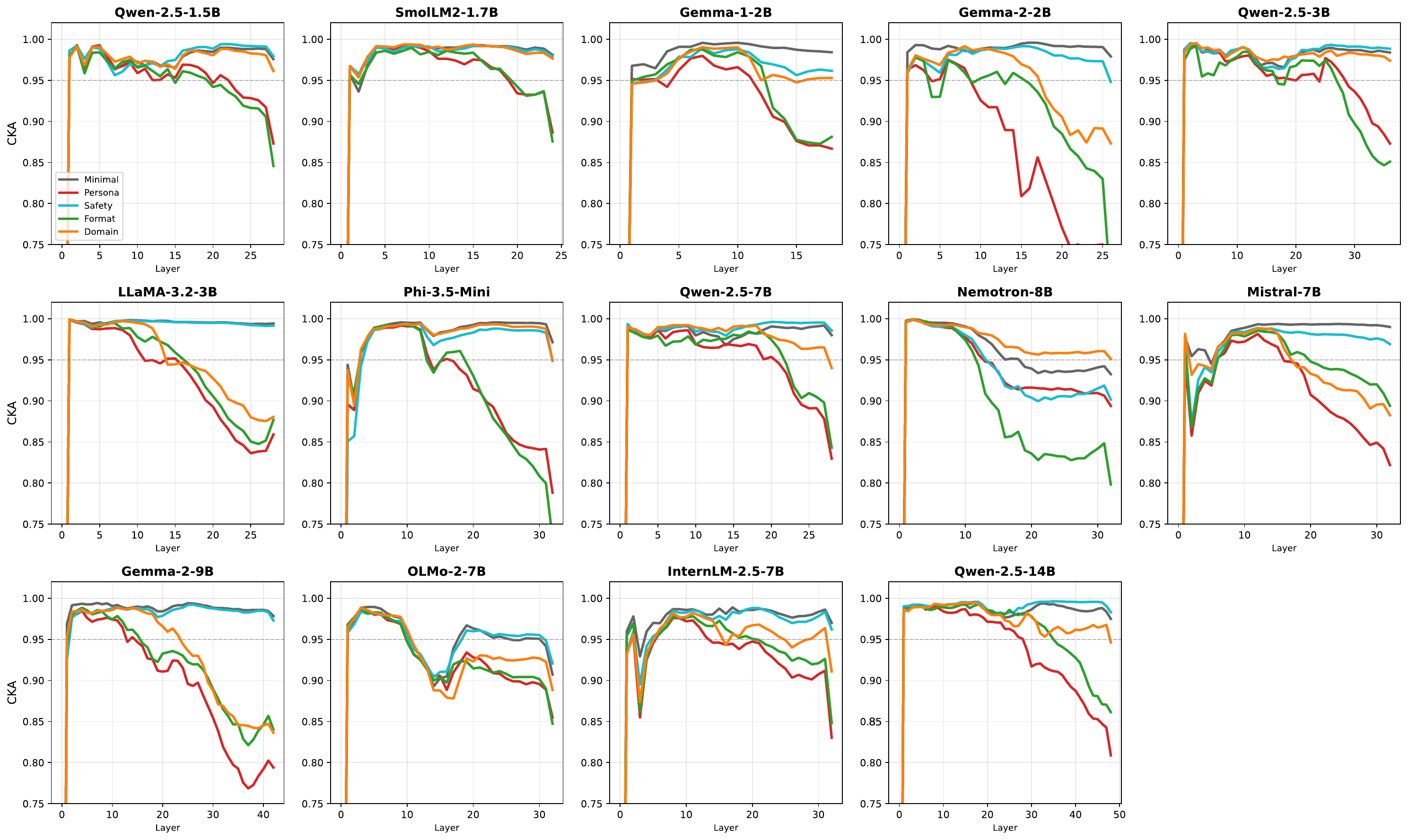}
\caption{Per-category CKA profiles for the 14-model core cohort. Persona (red) and Format (green) produce the deepest changes across all architectures.}
\label{fig:supp_curves_all}
\end{figure}

\section{Output Similarity Matrices}
\label{app:output}

\begin{figure}[ht]
\centering
\includegraphics[width=0.7\textwidth]{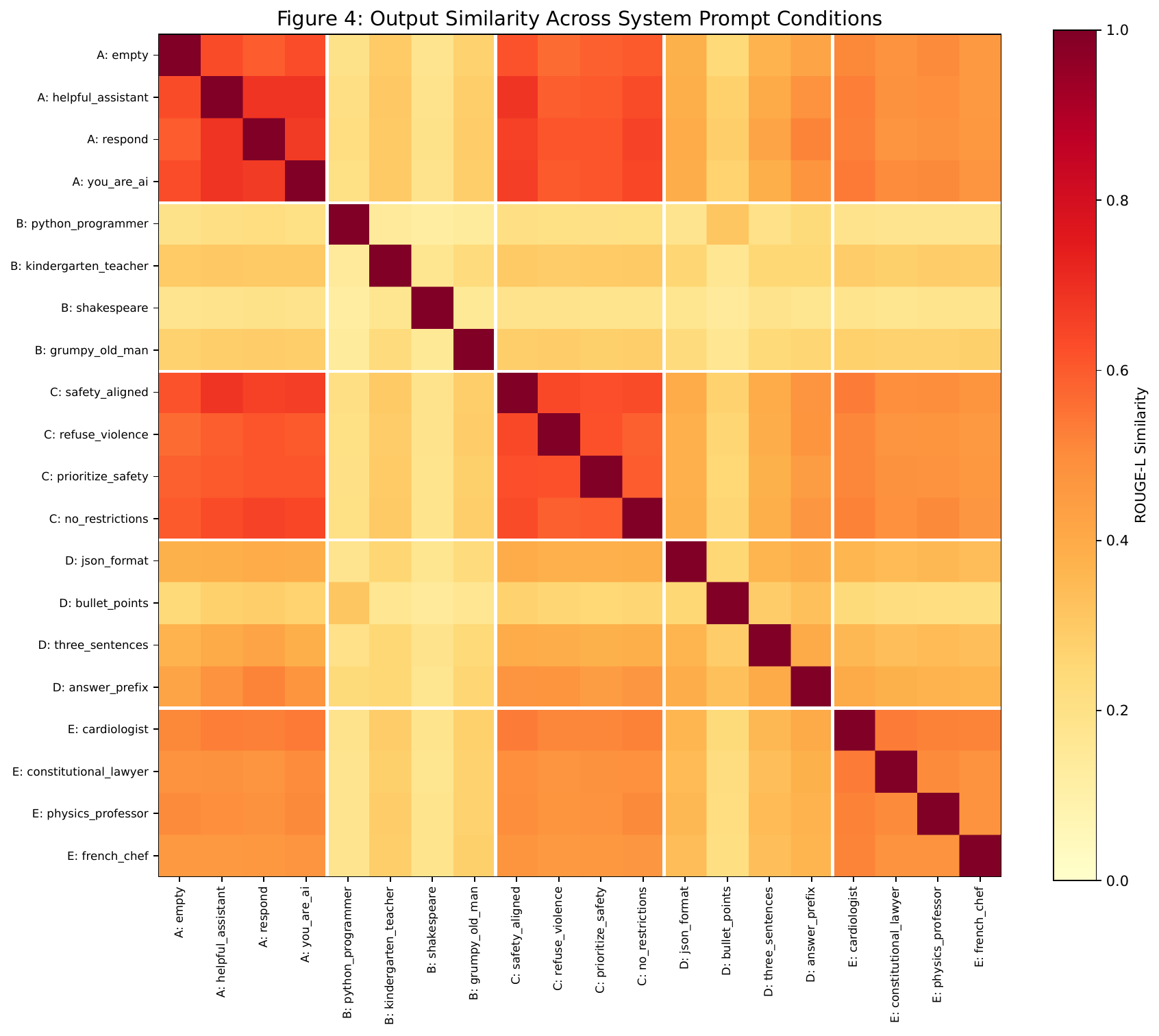}
\caption{ROUGE-L output similarity matrices. Each cell shows the average ROUGE-L between outputs under two system prompts. The block-diagonal structure reflects within-category similarity. Effect sizes range from 0.449 to 0.683.}
\label{fig:output_similarity}
\end{figure}

\section{Threshold Sensitivity Analysis}
\label{app:threshold}

Varying $\tau$ from 0.90 to 0.99 produces monotonic changes in absolute penetration but preserves rank ordering (Figure~\ref{fig:supp_threshold}). At $\tau = 0.90$, penetration ranges from 0.000 (Qwen-1.5B) to 0.545 (OLMo-2-7B); at $\tau = 0.99$, from 0.103 to 0.818. The Spearman rank correlation between penetration rankings at $\tau = 0.95$ and any other threshold exceeds 0.95.

\begin{figure}[ht]
\centering
\includegraphics[width=0.7\textwidth]{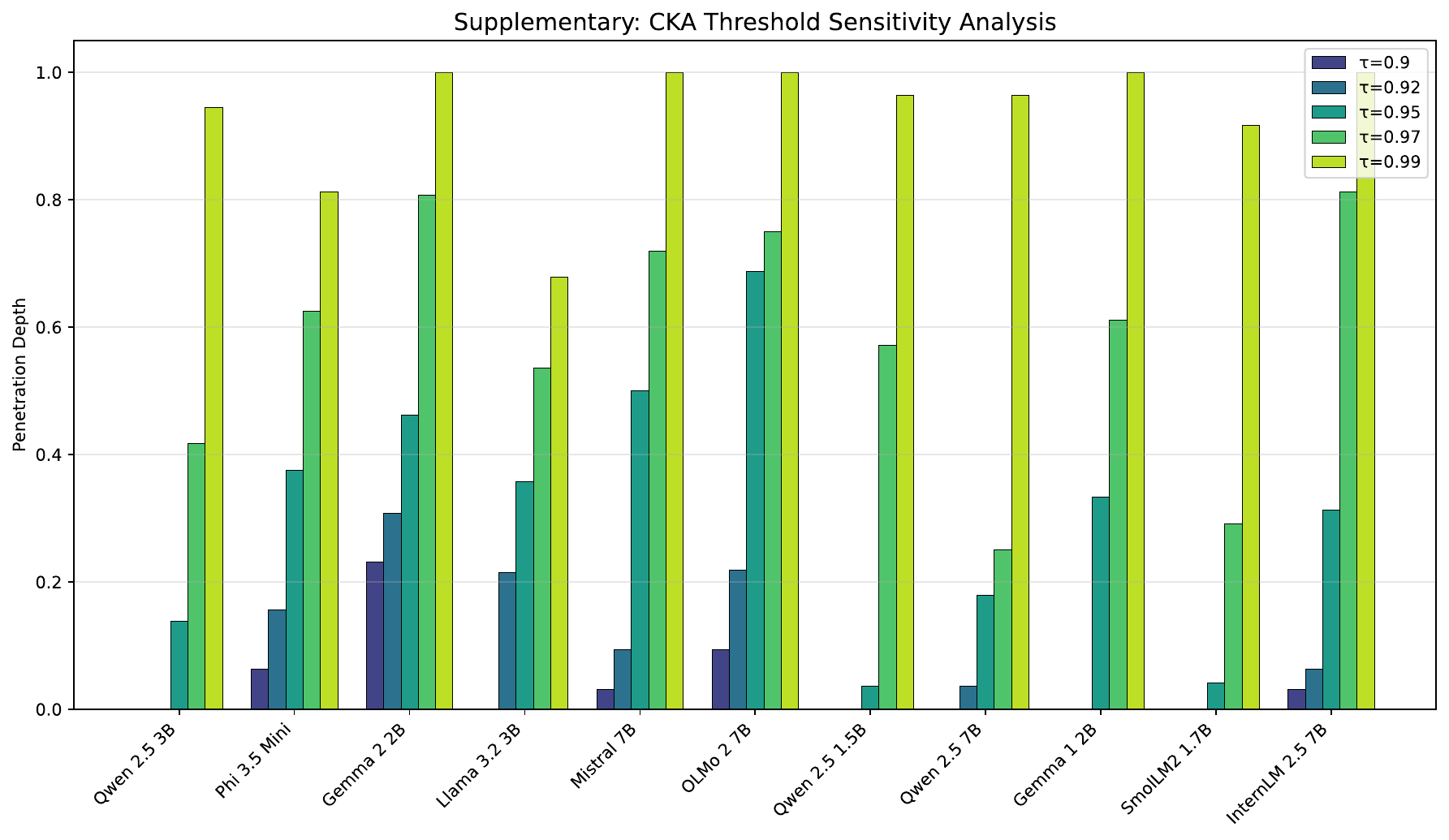}
\caption{Penetration depth as a function of CKA threshold $\tau$ for the 14-model core cohort. Rank ordering is preserved across thresholds.}
\label{fig:supp_threshold}
\end{figure}

\section{Proof of Proposition~\ref{prop:cka_bound}}
\label{app:proofs}

\paragraph{Derivation.}
Let $\mat{A} = \bar{\mat{X}}^\top \bar{\mat{X}}$ and $\mat{C} = \mat{E}^\top\bar{\mat{X}}$, so $\bar{\mat{Y}}^\top\bar{\mat{X}} = \mat{A} + \mat{C}$ and $\bar{\mat{Y}}^\top\bar{\mat{Y}} = \mat{A} + \mat{C} + \mat{C}^\top + \mat{E}^\top\mat{E}$. Since both $\bar{\mat{X}}$ and $\bar{\mat{Y}}$ are column-centered, $\mat{E} = \bar{\mat{Y}} - \bar{\mat{X}}$ is automatically centered, so this is not an additional assumption.

The numerator of CKA is
\begin{equation*}
\|\mat{A} + \mat{C}\|_F^2 = \|\mat{A}\|_F^2 + 2\langle \mat{A}, \mat{C} \rangle_F + \|\mat{C}\|_F^2.
\end{equation*}
The denominator is $\|\mat{A}\|_F \cdot \|\mat{A} + \mat{C} + \mat{C}^\top + \mat{E}^\top\mat{E}\|_F$. The first-order contribution $2\langle \mat{A}, \mat{C} \rangle_F$ appears identically in both numerator and denominator of the CKA ratio, producing exact cancellation at $O(\|\mat{E}\|_F)$: at first order in $\mat{E}$, the numerator is $\|\mat{A}\|_F^2 + 2\langle \mat{A}, \mat{C} \rangle_F$ and the denominator is $\|\mat{A}\|_F^2 + \langle \mat{A}, \mat{C} + \mat{C}^\top \rangle_F = \|\mat{A}\|_F^2 + 2\langle \mat{A}, \mat{C} \rangle_F$ (since $\langle \mat{A}, \mat{C}^\top \rangle_F = \langle \mat{A}, \mat{C} \rangle_F$ by the symmetry of $\mat{A}$). Therefore $\mathrm{CKA} \approx (\|\mat{A}\|_F^2 + 2\langle \mat{A}, \mat{C} \rangle_F) / (\|\mat{A}\|_F^2 + 2\langle \mat{A}, \mat{C} \rangle_F) = 1$ at first order; the leading deviation is second-order.

Applying the submultiplicativity $\|\mat{C}\|_F = \|\mat{E}^\top\bar{\mat{X}}\|_F \leq \|\mat{E}\|_F \|\bar{\mat{X}}\|_F$ and the Cauchy-Schwarz inequality $|\langle \mat{A}, \mat{C} \rangle_F| \leq \|\mat{A}\|_F\|\mat{C}\|_F$, the CKA can be written as $1 - \Delta$ where $\Delta$ is bounded by $2\|\mat{E}\|_F^2 / \|\mat{A}\|_F + O(\|\mat{E}\|_F^3 / \|\mat{A}\|_F^{3/2})$. The key step uses the expansion $(1+x)^{-1} = 1 - x + O(x^2)$ for the denominator terms involving $\mat{E}$. \qed

\section{Per-Prompt Penetration Ranking}
\label{app:per_prompt}

Figure~\ref{fig:per_prompt} shows penetration depth for each individual system prompt, averaged across the 14-model core cohort. Within the Persona category, prompts requiring more divergent communication styles penetrate deeper: ``Shakespearean actor'' (0.612) exceeds ``Python programmer'' (0.545), ``grumpy old man'' (0.478), and ``kindergarten teacher'' (0.362). This gradient suggests that the degree of representational restructuring scales with how different the requested behavior is from the model's default register. Within Safety, all four prompts cluster near the bottom regardless of framing, confirming that the shallow penetration is a property of safety-type instructions rather than an artifact of individual prompt wording.

\begin{figure}[ht]
\centering
\includegraphics[width=0.85\textwidth]{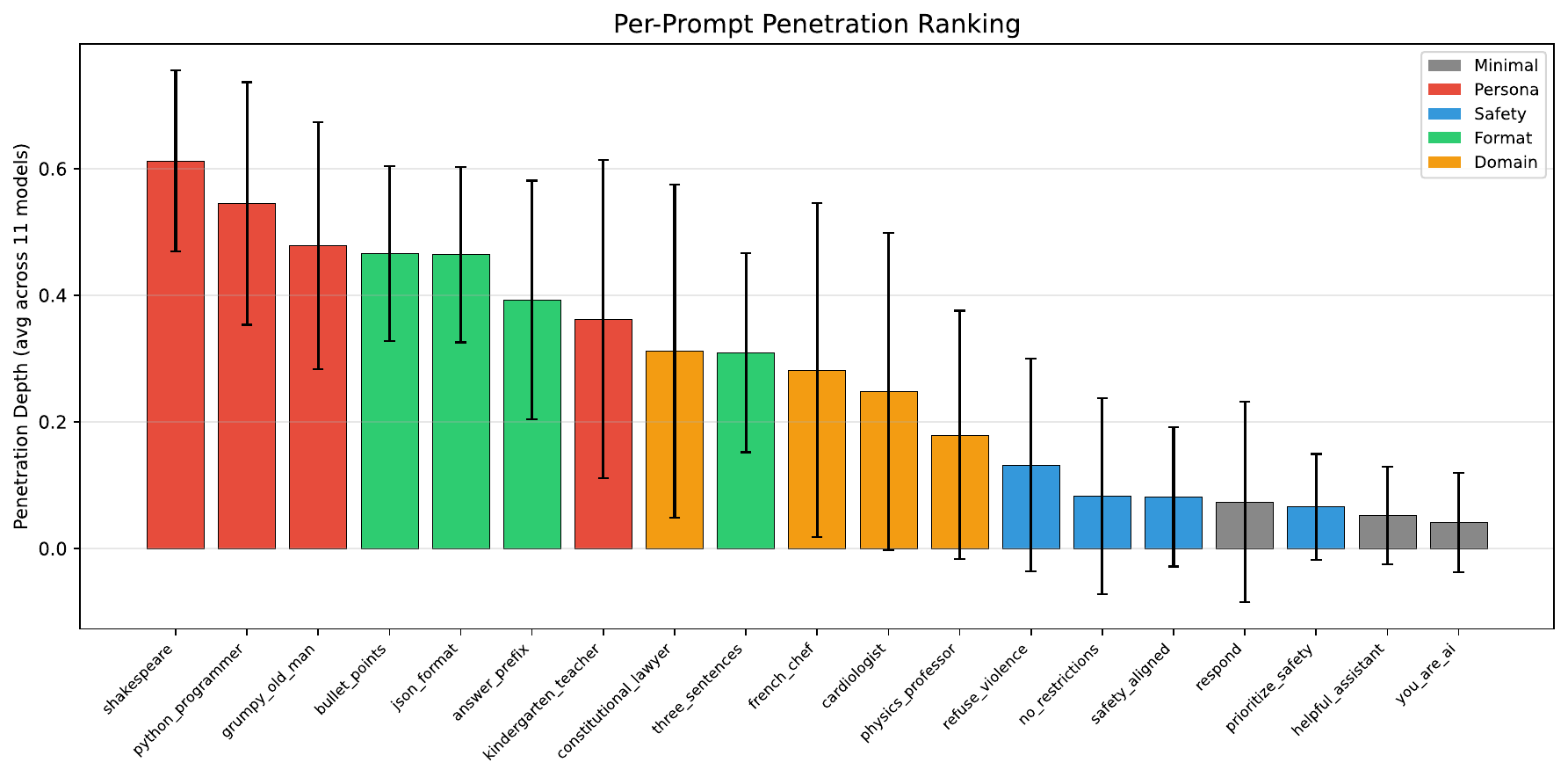}
\caption{Per-prompt penetration depth averaged across the 14-model core cohort. Persona prompts (red) requiring more divergent styles penetrate deeper. All safety prompts (blue) cluster near the bottom.}
\label{fig:per_prompt}
\end{figure}

\section{Safety vs.\ Permissive Profiles: 14-Model Core Cohort}
\label{app:safety_all}

Figure~\ref{fig:supp_safety_all} extends Figure~\ref{fig:adversarial} to all 14 sub-72B models. The near-perfect correlation between safety and permissive CKA profiles (mean $r = 0.997$, range $0.987$--$1.000$) holds universally across this cohort, and the same pattern reproduces at commercial scale (Pearson $r = 0.998$, $p < 0.001$, at LLaMA-3.1-70B and Qwen-2.5-72B; Section~\ref{sec:commercial_scale}).

\begin{figure}[ht]
\centering
\includegraphics[width=\textwidth]{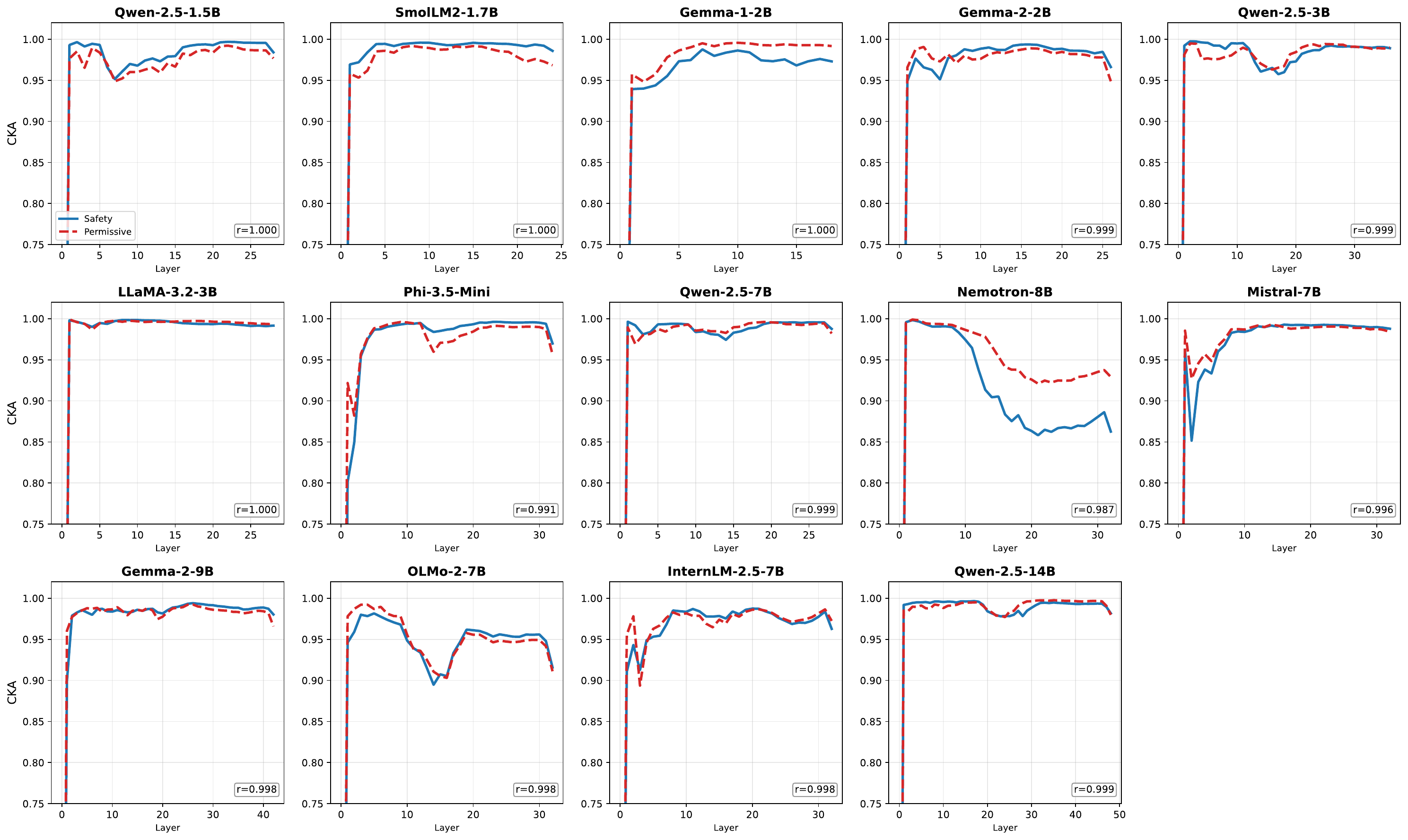}
\caption{Safety (blue) vs.\ permissive (red dashed) CKA profiles for the 14-model core cohort. Correlations range from 0.987 to 1.000.}
\label{fig:supp_safety_all}
\end{figure}

\section{Linear Probing vs.\ CKA: 14-Model Core Cohort}
\label{app:probing_all}

Figure~\ref{fig:supp_probing_all} extends Figure~\ref{fig:probing} to all 14 sub-72B models. Probing accuracy (red) exceeds 85\% at every layer of every model in this cohort while CKA (blue dashed) drops selectively, confirming the encoding-without-restructuring pattern is universal across architectures and scales from 1.5B to 14B. Extension to the three commercial-scale models is left to future work pending the multi-GPU pipeline.

\begin{figure}[ht]
\centering
\includegraphics[width=\textwidth]{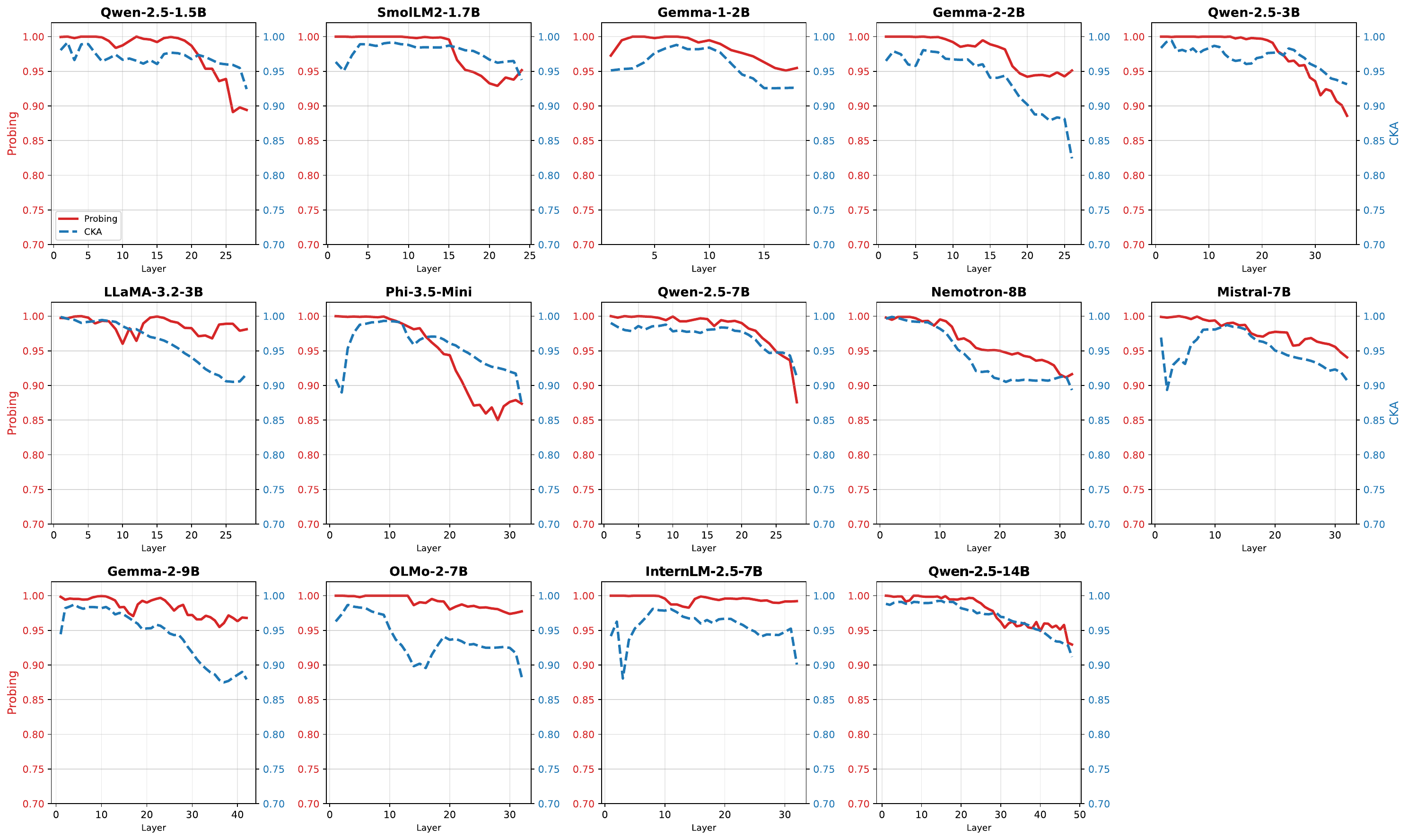}
\caption{Linear probing accuracy (red) vs.\ average CKA (blue dashed) for the 14-model core cohort. The disconnect between high probing accuracy and variable CKA holds across all evaluated architectures and scales from 1.5B to 14B.}
\label{fig:supp_probing_all}
\end{figure}

\section{Computational Details}
\label{app:compute}

Experiments span two hardware configurations. The 14 models up to 14B parameters run on a single NVIDIA GeForce RTX 5090 GPU (32GB VRAM) with CUDA 12.8, PyTorch 2.7, and HuggingFace Transformers 4.53. Models are loaded in bfloat16; all models up to 9B fit within the 32GB memory budget with room to spare. Qwen-2.5-14B requires approximately 29GB VRAM in bfloat16, approaching the memory ceiling of consumer hardware. Hidden state extraction for the 14 models across 20 prompts and 100 queries completed in approximately 1.0 hours. Response generation (greedy decoding, max 200 tokens) required approximately 12.1 hours. Activation patching required approximately 2.8 hours. Robustness analyses (Procrustes distance, cosine similarity, bootstrap resampling) required approximately 13.1 hours. Subtotal: approximately 29 GPU-hours on consumer hardware.

The three commercial-scale models (Qwen-2.5-32B, LLaMA-3.1-70B, Qwen-2.5-72B) run on $2 \times$ NVIDIA A100 80GB PCIe GPUs with HuggingFace \texttt{device\_map="auto"} for automatic layer sharding. Qwen-2.5-32B fits on a single A100 ($\sim$64GB in bfloat16); LLaMA-3.1-70B and Qwen-2.5-72B require both GPUs ($\sim$140GB and $\sim$144GB respectively). End-to-end pipeline time per model is approximately 4--6 hours for 32B and 12--20 hours each for the 70B/72B models, dominated by response generation at 1--5 tokens/second. Subtotal: approximately 50 A100 GPU-hours. Combined total: approximately 79 GPU-hours.

The activation-patching implementation hooks into forward-pass computations via PyTorch forward hooks; for multi-GPU sharded models, source vectors are explicitly moved to the destination device (\texttt{.to(hs.device)}) before patching to avoid cross-device tensor errors. Phi-3.5 requires a small monkey-patch to provide \texttt{DynamicCache.get\_max\_length}, and Gemma-2 benefits from \texttt{torch.\_dynamo.\allowbreak set\_stance("force\_eager")} to avoid a \texttt{torch.\allowbreak compile} hang during generation. All such hardware-specific shims are included in the released code repository.

\vskip 0.2in
\bibliography{references}

\end{document}